\newtheorem{definition}{Definition}
\newtheorem{lemma}{Lemma}
\newtheorem{theorem}{Theorem}
\newtheorem{proposition}{Proposition}
\newtheorem{corollary}{Corollary}
\title{CCMI : Classifier based Conditional Mutual Information Estimation }
\author{\vspace{-0.2in}} 
\author{ {\bf Sudipto Mukherjee, Himanshu Asnani, Sreeram Kannan} \\
Department of Electrical and Computer Engineering, \\
University of Washington, Seattle, WA.\\
\{sudipm, asnani, ksreeram\}@uw.edu \\
}
\begin{document}
\maketitle

\begin{abstract}
Conditional Mutual Information (CMI) is a measure of conditional dependence between random variables X and Y, given another random variable Z. It can be used to quantify conditional dependence among variables in many data-driven inference problems such as graphical models, causal learning, feature selection and time-series analysis. While k-nearest neighbor ($k$NN) based estimators as well as kernel-based methods have been widely used for CMI estimation, they suffer severely from the curse of dimensionality. In this paper, we leverage advances in classifiers and generative models to design methods for CMI estimation. Specifically, we introduce an estimator for KL-Divergence based on the likelihood ratio by training a classifier to distinguish the observed joint distribution from the product distribution. We then show how to construct several CMI estimators using this basic divergence estimator by drawing ideas from conditional generative models. We demonstrate that the estimates from our proposed approaches do not degrade in performance with increasing dimension and obtain significant improvement over the widely used KSG estimator. Finally, as an application of accurate CMI estimation, we use our best estimator for conditional independence testing and achieve superior performance than the state-of-the-art tester on both simulated and real data-sets.
\end{abstract}

\vspace{-0.3in}
\section{Introduction}
\vspace{-0.1in}

Conditional mutual information (CMI) is a fundamental information theoretic quantity that extends the nice properties of mutual information (MI) in conditional settings. For three continuous random variables, $X$, $Y$ and $Z$, the conditional mutual information is defined as:
\begin{equation*}
I(X;Y|Z) = \iiint p(x,y,z) \log\frac{p(x,y,z)}{p(x,z)p(y|z)} dx dy dz
\label{cmi-int}
\end{equation*}

assuming that the distributions admit the respective densities $p(\cdot)$.
 One of the striking features of MI and CMI is that they can capture non-linear dependencies between the variables. In scenarios where Pearson correlation is zero even when the two random variables are dependent, mutual information can recover the truth. 
 Likewise, in the sense of conditional independence for the case of three random variables $X$,$Y$ and $Z$, conditional mutual information provides strong guarantees, i.e., $X \perp Y | Z \iff I(X;Y|Z) = 0$. 

The conditional setting is even more interesting as dependence between $X$ and $Y$ can potentially change based on how they are connected to the conditioning variable. For instance, consider a simple Markov chain where $X \rightarrow Z \rightarrow Y$. Here, $X \perp Y |Z$. But a slightly different relation $X \rightarrow Z \leftarrow Y$ has $X \not\perp Y |Z$, even though $X$ and $Y$ may be independent as a pair. It is a well known fact in Bayesian networks that a node is independent of its non-descendants given its parents. CMI goes beyond stating whether the pair $(X,Y)$ is conditionally dependent or not. It also provides a quantitative strength of dependence. 
\vspace{-0.1in}

\subsection{Prior Art}
\vspace{-0.1in}

The literature is replete with works aimed at applying CMI for data-driven knowledge discovery. \cite{fleuret} used CMI for fast binary feature selection to improve classification accuracy. \cite{loeckx} improved non-rigid image registration by using CMI as a similarity measure instead of global mutual information. CMI has been used to infer gene-regulatory networks \citep{liang2008gene} or protein modulation \citep{giorgi2014protein} from gene expression data. Causal discovery \citep{Licausal, hlinka, vejmelka2008inferring} is yet another application area of CMI estimation.

Despite its wide-spread use, estimation of conditional mutual information remains a challenge. One naive method may be to estimate the joint and conditional densities from data and plug it into the expression for CMI. But density estimation is not sample efficient and is often more difficult than estimating the quantities directly. The most widely used technique expresses CMI in terms of appropriate arithmetic of differential entropy estimators (referred to here as $\Sigma H$ estimator): 
$I(X;Y|Z) = h(X,Z) + h(Y,Z) - h(Z) - h(X,Y,Z)
\label{entropy-sum}$, where $h(X) = - \int\limits_{\mathcal{X}} p(x) \log p(x) \, dx$ is known as the differential entropy.

The differential entropy estimation problem has been studied extensively by \cite{beirlant1997nonparametric, nemenman2002entropy, miller2003new, lee2010sample, lesniewicz2014expected, sricharan2012estimation, singh2014exponential} and can be estimated either based on kernel-density \citep{kandasamy2015nonparametric,gao2016breaking} or $k$-nearest-neighbor estimates \citep{sricharan2013ensemble,jiao2017nearest, pal2010estimation, kozachenko1987sample, singh2003nearest, singh2016finite}. Building on top of $k$-nearest-neighbor estimates and breaking the paradigm of $\Sigma H$ estimation, a coupled estimator (which we address henceforth as KSG) was proposed by \cite{kraskov}. It generalizes to mutual information,  conditional mutual information as well as for other multivariate information measures, including estimation in scenarios when the distribution can be mixed \citep{runge2017conditional, frenzel2007partial, gao2017mixture, gao2018demystifying, vejmelka2008inferring, gdmnips2018}.

The $k$NN approach has the advantage that it can naturally adapt to the data density and does not require extensive tuning of kernel band-widths. However, all these approaches suffer from the curse of dimensionality and are unable to scale well with dimensions. Moreover, \cite{gao2015efficient} showed that exponentially many samples are required (as MI grows) for the accurate estimation using $k$NN based estimators. This brings us to the central motivation of this work : \textbf{\textit{Can we propose estimators for conditional mutual information that estimate well even in high dimensions ?}} 
\vspace{-0.2in}
\subsection{Our Contribution}
\vspace{-0.1in}

In this paper, we explore various ways of estimating CMI by leveraging tools from classifiers and generative models. To the best of our knowledge, this is the first work that deviates from the framework of $k$NN and kernel based CMI estimation and introduces neural networks to solve this problem. 

The main contributions of the paper can be summarized as follows :

\textbf{Classifier Based MI Estimation:} We propose a novel KL-divergence estimator based on classifier two-sample approach that is more stable and performs superior to the recent neural methods \citep{belghazi}. \newline
\textbf{Divergence Based CMI Estimation:} We express CMI as the KL-divergence between two distributions $p_{xyz} = p(z)p(x|z)p(y|x,z)$ and $q_{xyz} = p(z)p(x|z)p(y|z)$, and explore candidate generators for obtaining samples from $q(\cdot)$. The CMI estimate is then obtained from the divergence estimator. 
\newline
\textbf{Difference Based CMI Estimation:} Using the improved MI estimates, and the difference relation $I(X;Y|Z) = I(X;YZ)-I(X;Z)$, we show that estimating CMI using a difference of two MI estimates performs best among several other proposed methods in this paper such as divergence based CMI estimation and KSG.
\newline
\textbf{Improved Performance in High Dimensions:} On both linear and non-linear data-sets, all our estimators perform significantly better than KSG. Surprisingly, our estimators perform well even for dimensions as high as $100$, while KSG fails to obtain reasonable estimates even beyond $5$ dimensions. 
\newline
\textbf{Improved Performance in Conditional Independence Testing:} As an application of CMI estimation, we use our best estimator for conditional independence testing (CIT) and obtain improved performance compared to the state-of-the-art CIT tester on both synthetic and real data-sets.
\vspace{-0.2in}
\section{Estimation of Conditional Mutual Information}
\vspace{-0.1in}
The CMI estimation problem from finite samples can be stated as follows. Let us consider three random variables $X$, $Y$, $Z \sim p(x,y,z)$, where $p(x,y,z)$ is the joint distribution. Let the dimensions of the random variables be $d_x$, $d_y$ and $d_z$ respectively. We are given $n$ samples $\{(x_i, y_i, z_i) \}_{i=1}^n$ drawn i.i.d from $p(x,y,z)$. So $x_i \in \mathbb{R}^{d_x}, y_i \in \mathbb{R}^{d_y}$ and $z_i \in \mathbb{R}^{d_z}$. The goal is to estimate $I(X;Y|Z)$ from these $n$ samples.
\vspace{-0.1in}
\subsection{Divergence Based CMI Estimation}
\vspace{-0.1in}
\begin{definition}
The Kullback-Leibler (KL) divergence between two distributions $p(\cdot)$ and $q(\cdot)$ is given as :
\[
D_{KL}(p||q) = \int p(x) \log \frac{p(x)}{q(x)} \, dx
\]
\end{definition}

\begin{definition}
Conditional Mutual Information (CMI) can be expressed as a KL-divergence between two distributions $p(x,y,z)$ and $q(x,y,z) = p(x,z)p(y|z)$, i.e., 
\[
I(X;Y|Z) = D_{KL}(p(x,y,z)||p(x,z)p(y|z))
\]
\end{definition}

The definition of CMI as a KL-divergence naturally leads to the question : \textit{Can we estimate CMI using an estimator for divergence ?} However, the problem is still non-trivial since we are only given samples from $p(x,y,z)$ and the divergence estimator would also require samples from $p(x,z)p(y|z)$. This further boils down to whether we can learn the distribution $p(y|z)$. 
\vspace{-0.1in}
\subsubsection{Generative Models}
\vspace{-0.05in}
We now explore various techniques to learn the conditional distribution $p(y|z)$ given samples $\sim p(x,y,z)$. This problem is fundamentally different from drawing independent samples from the marginals $p(x)$ and  $p(y)$, given the joint $p(x,y)$. In this simpler setting, we can simply permute the data to obtain $\{x_i, y_{\pi(i)} \}_{i=1}^n$ ($\pi$ denotes a permutation, $\pi(i) \neq i$). This would emulate samples drawn from $q(x,y) = p(x)p(y)$. But, such a permutation scheme does not work for $p(x,y,z)$ since it would destroy the dependence between $X$ and $Z$. The problem is solved using recent advances in generative models which aim to learn an unknown underlying distribution from samples.

\textbf{Conditional Generative Adversarial Network (CGAN)}: There exist extensions of the basic GAN framework \citep{goodfellow} in conditional settings, CGAN \citep{mirza}.
Once trained, the CGAN can then generate samples from the generator network as $y = \mathcal{G}(s, z), s \sim p(s), z \sim p(z)$. 


\textbf{Conditional Variational Autoencoder (CVAE)}: 
Similar to CGAN, the conditional setting, CVAE \citep{kingma} \citep{cvae}, aims to maximize the conditional log-likelihood. The input to the decoder network is the value of $z$ and the latent vector $s$ sampled from standard Gaussian. The decoder $Q$ gives the conditional mean and conditional variance (parametric functions of $s$ and $z$) from which $y$ is then sampled. 

\textbf{$k$NN based permutation}: A simpler algorithm for generating the conditional $p(y|z)$ is to permute data values where $z_i \approx z_j$. Such methods are popular in conditional independence testing literature \citep{ccit, kcit}. For a given point $\{x_i, y_i, z_i\}$, we find the $k$-nearest neighbor of $z_i$. Let us say it is $z_j$ with the corresponding data point as $\{x_j, y_j, z_j\}$. Then $\{x_i, y_j, z_i\}$ is a sample from $q(x,y,z)$.  

Now that we have outlined multiple techniques for estimating $p(y|z)$, we next proceed to the problem of estimating KL-divergence.

\subsubsection{Divergence Estimation}

Recently, \cite{belghazi} proposed a neural network based estimator of mutual information (MINE) by utilizing lower bounds on KL-divergence. Since MI is a special case of KL-divergence, their neural estimator can be extended for divergence estimation as well. The estimator can be trained using back-propagation and was shown to out-perform traditional methods for MI estimation. The core idea of MINE is cradled in a dual representation of KL-divergence. The two main lower bounds used by MINE are stated below.  

\begin{definition}
The Donsker-Varadhan representation expresses KL-divergence as a supremum over functions,
\begin{equation}
D_{KL}(p||q) = \sup\limits_{f \in \mathcal{F}} \mathop{\mathbb{E}}\limits_{x\sim p}[ f(x)] - \log(\mathop{\mathbb{E}}\limits_{x\sim q} [\exp(f(x))])
\label{DV-bound}
\end{equation}
\end{definition}
\vspace{-0.1in}

where the function class $\mathcal{F}$ includes those functions that lead to finite values of the expectations. 
\begin{definition}
The f-divergence bound gives a lower bound on the KL-divergence:
\begin{equation}
D_{KL}(p||q) \geq \sup\limits_{f \in \mathcal{F}} \mathop{\mathbb{E}}\limits_{x\sim p}[ f(x)] - \mathop{\mathbb{E}}\limits_{x\sim q} [\exp(f(x)-1)]
\label{f-bound}
\end{equation}
\end{definition}
\vspace{-0.1in}

MINE uses a neural network $f_{\theta}$ to represent the function class $\mathcal{F}$ and uses gradient descent to maximize the RHS in the above bounds.

Even though this framework is flexible and straight-forward to apply, it presents several practical limitations. The estimation is very sensitive to choices of hyper-parameters (hidden-units/layers) and training steps (batch size, learning rate). We found the optimization process to be unstable and to diverge at high dimensions (Section 4. Experimental Results). Our findings resonate those by \cite{poole} in which the authors found the networks difficult to tune even in toy problems. 

\vspace{-0.1in}
\subsection{Difference Based CMI Estimation}
\vspace{-0.1in}
Another seemingly simple approach to estimate CMI could be to express it as a difference of two mutual information terms by invoking the chain rule, i.e.: $I(X;Y|Z) = I(X; Y,Z) - I(X;Z)$.
As stated before, since mutual information is a special case of KL-divergence, viz. $I(X;Y) = D_{KL}(p(x,y)||p(x)p(y))$, this again calls for a stable, scalable, sample efficient KL-divergence estimator as we present in the next Section. 
 \vspace{-0.2in}
\section{Classifier Based MI Estimation} \label{CCMI}
\vspace{-0.1in}
In their seminal work on independence testing, \cite{lopezC2ST} introduced classifier two-sample test to distinguish between samples coming from two unknown distributions $p$ and $q$. The idea was also adopted for conditional independence testing by \cite{ccit}. The basic principle is to train a binary classifier by labeling samples $x \sim p$ as $1$ and those coming from $x \sim q$ as $0$, and to test the null hypothesis $\mathcal{H}_0 : p = q$. Under the null, the accuracy of the binary classifier will be close to $0.5$. It will be away from $0.5$ under the alternative. The accuracy of the binary classifier can then be carefully used to define $P$-values for the test. 

We propose to use the classier two-sample principle for estimating the likelihood ratio $\frac{p(x,y)}{p(x)p(y)}$. While existing literature has instances of using the likelihood ratio for MI estimation, the algorithms to estimate the likelihood ratio are quite different from ours. Both \citep{suzuki, nguyen2008} formulate the likelihood ratio estimation as a convex relaxation by leveraging the Legendre-Fenchel duality. But performance of the methods depend on the choice of suitable kernels and would suffer from the same disadvantages as mentioned in the Introduction. 
\vspace{-0.1in}
\subsection{Problem Formulation}
\vspace{-0.1in}
Given $n$ i.i.d samples $\{x_i^p\}_{i=1}^n , x_i^p \sim p(x)$ and $m$ i.i.d samples $\{x_j^q\}_{j=1}^m, x_j^q \sim q(x)$, we want to estimate $D_{KL}(p||q)$. We label the points drawn from $p(\cdot)$ as $y = 1$ and those from $q(\cdot)$ as $y=0$. A binary classifier is then trained on this supervised classification task. Let the prediction for a point $l$ by the classifier is $\gamma_l$ where $\gamma_l = Pr(y = 1 | x_l)$ ($Pr$ denotes probability). Then the point-wise likelihood ratio for data point $l$ is given by $\mathcal{L}(x_l) = \frac{\gamma_l}{1 -\gamma_l}$. 

The following Proposition is elementary and has already been observed in \cite{belghazi}(Proof of Theorem 4). We restate it here for completeness and quick reference. 

\begin{proposition} The optimal function in Donsker-Varadhan representation \eqref{DV-bound} is the one that computes the point-wise log-likelihood ratio, i.e, $f^*(x) =  \log \frac{p(x)}{q(x)} \, \forall \, x$, (assuming $p(x) = 0$, where-ever $q(x) = 0$).
\label{prop-1}
\end{proposition}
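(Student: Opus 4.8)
The plan is to verify directly that the candidate $f^*(x) = \log\frac{p(x)}{q(x)}$ attains the value $D_{KL}(p\|q)$ in the Donsker-Varadhan objective \eqref{DV-bound}, and then to argue that no other $f \in \mathcal{F}$ can exceed this value, so that $f^*$ is indeed a maximizer. Since the representation \eqref{DV-bound} is already stated as an equality, its supremum equals $D_{KL}(p\|q)$; hence it suffices to show that substituting $f^*$ recovers exactly this number, and (for completeness) that the objective never overshoots it.

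First I would substitute $f^*$ into the two terms of \eqref{DV-bound}. Under the absolute-continuity assumption ($p=0$ wherever $q=0$) the ratio $p/q$ is well defined on the support of $q$, so the first term is $\mathbb{E}_{x\sim p}[\log\frac{p(x)}{q(x)}] = D_{KL}(p\|q)$ by definition. For the second term, a change of measure gives $\mathbb{E}_{x\sim q}[\exp(f^*(x))] = \int q(x)\frac{p(x)}{q(x)}\,dx = \int p(x)\,dx = 1$, so its logarithm vanishes. The objective therefore evaluates to $D_{KL}(p\|q) - 0 = D_{KL}(p\|q)$.

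To certify that this is the supremum rather than merely a feasible value, I would run the standard Gibbs change-of-measure argument for an arbitrary $f\in\mathcal{F}$. Writing $Z = \mathbb{E}_{x\sim q}[\exp f]$ and defining the tilted density $g(x) = q(x)\exp(f(x))/Z$, a short rearrangement shows $\mathbb{E}_{x\sim p}[f] - \log Z = \mathbb{E}_{x\sim p}[\log\frac{g}{q}]$, whence $D_{KL}(p\|q) - \big(\mathbb{E}_{x\sim p}[f]-\log Z\big) = \mathbb{E}_{x\sim p}[\log\frac{p}{g}] = D_{KL}(p\|g) \ge 0$. Thus the objective equals $D_{KL}(p\|q) - D_{KL}(p\|g) \le D_{KL}(p\|q)$ for every $f$, confirming that $f^*$ is optimal. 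Alternatively, one may note that the objective is concave in $f$ (linear minus the convex log-sum-exp functional), so any stationary point is a global maximum, and setting the first variation $p(x) - q(x)e^{f(x)}/Z$ to zero reproduces $f^*$ up to an additive constant.

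The main subtlety I anticipate is uniqueness: the objective is invariant under $f \mapsto f + c$, so the maximizer is determined only up to an additive constant, and $f^* = \log\frac{p}{q}$ is the canonical representative for which the normalizer $Z$ equals $1$. I would phrase the conclusion accordingly as ``an'' optimal function. The only remaining points are bookkeeping: ensuring $f^*\in\mathcal{F}$ so that the relevant expectations are finite (exactly the content of restricting $\mathcal{F}$ to functions with finite expectations), together with the absolute-continuity hypothesis already recorded in the statement.
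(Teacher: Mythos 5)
Your proof is correct and follows essentially the same route the paper relies on: the paper gives no argument of its own but defers to the proof of Theorem 4 in \cite{belghazi}, which is exactly your two-step argument (evaluating the objective at $f^* = \log\frac{p}{q}$ to get $D_{KL}(p\|q)$, then the Gibbs tilted-density identity showing the gap for arbitrary $f$ is $D_{KL}(p\|g) \geq 0$). Your remarks on the additive-constant ambiguity and on $f^*$ belonging to $\mathcal{F}$ are sound and, if anything, slightly more careful than the source.
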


\vspace{-0.1in}
Based on Proposition \ref{prop-1}, the next step is to substitute the estimates of point-wise likelihood ratio in \eqref{DV-bound} to obtain an estimate of KL-divergence.
\vspace{-0.2in}
\begin{equation}
\hat{D}_{KL}(p||q) = \frac{1}{n}\sum\limits_{i=1}^n \log \mathcal{L}(x_i^p) - \log \left(  \frac{1}{m}\sum\limits_{j=1}^m \mathcal{L}(x_j^q) \right)
\label{plug-in}
\end{equation}

We obtain an estimate of mutual information from (\ref{plug-in}) as $\hat{I}_n(X;Y) = \hat{D}_{KL}(p(x,y)||p(x)p(y))$. This classifier-based estimator for MI (Classifier-MI) has the following theoretical properties under Assumptions (A1)-(A4) (stated in Section \ref{theo_ccmi}).
\begin{theorem}
Under Assumptions (A1)-(A4), Classifier-MI is consistent, i.e., given $\epsilon, \delta > 0, \exists \, n \in \mathbb{N} $, such that with probability at least $1-\delta$, we have \\
\vspace{-0.1in}
\[| \hat{I}_n (X;Y) - I(X;Y) | \leq \epsilon \]
\label{theorem-class-mi-consistent}
\end{theorem}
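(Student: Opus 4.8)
The plan is to read the estimator as the empirical Donsker--Varadhan functional evaluated at the classifier-derived function, and to split the total error into an approximation part (classifier versus optimal function) and a statistical part (empirical averages versus population expectations). Concretely, write $f^*(x)=\log\frac{p(x)}{q(x)}$ and $\hat f(x)=\log\frac{\gamma(x)}{1-\gamma(x)}=\log\mathcal{L}(x)$, and define the population functional $\mathcal{D}(f)=\mathbb{E}_{x\sim p}[f(x)]-\log\mathbb{E}_{x\sim q}[e^{f(x)}]$ together with its empirical version $\hat{\mathcal{D}}_n(f)=\frac1n\sum_i f(x_i^p)-\log\big(\frac1m\sum_j e^{f(x_j^q)}\big)$. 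Proposition \ref{prop-1} and the Donsker--Varadhan identity \eqref{DV-bound} give $\mathcal{D}(f^*)=D_{KL}(p\|q)=I(X;Y)$, while by construction $\hat I_n=\hat{\mathcal{D}}_n(\hat f)$ via \eqref{plug-in}. A triangle inequality then yields
\begin{equation*}
|\hat I_n-I(X;Y)|\le \underbrace{|\hat{\mathcal{D}}_n(\hat f)-\mathcal{D}(\hat f)|}_{\text{statistical error}}+\underbrace{|\mathcal{D}(\hat f)-\mathcal{D}(f^*)|}_{\text{approximation error}},
\end{equation*}
and it suffices to drive each term below $\epsilon/2$ with probability at least $1-\delta$ for $n$ sufficiently large.

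For the approximation error I would bound $|\mathcal{D}(\hat f)-\mathcal{D}(f^*)|\le \mathbb{E}_p|\hat f-f^*|+|\log\mathbb{E}_q[e^{\hat f}]-\log\mathbb{E}_q[e^{f^*}]|$. Using that the likelihood ratio is bounded away from $0$ and $\infty$ (which I expect to be furnished by the assumptions), $\log$ is Lipschitz on the relevant range and $e^{(\cdot)}$ is Lipschitz on the bounded range of $\hat f$ and $f^*$, so the second term is controlled by a constant times $\mathbb{E}_q|\hat f-f^*|$. Consistency of the classifier (again one of the assumptions) guarantees $\gamma(x)\to\gamma^*(x)=\frac{p(x)}{p(x)+q(x)}$, hence $\hat f\to f^*$ in $L^1(p)$ and $L^1(q)$ as $n$ grows, making this entire term vanish.

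For the statistical error I would handle the two sample averages separately. Boundedness of $\hat f$ (from the bounded-ratio assumption) lets a Hoeffding/Bernstein concentration inequality control $\frac1n\sum_i\hat f(x_i^p)-\mathbb{E}_p[\hat f]$ and $\frac1m\sum_j e^{\hat f(x_j^q)}-\mathbb{E}_q[e^{\hat f}]$; the latter then feeds through the Lipschitz $\log$ (whose argument is bounded below, again by the assumptions) to give the desired bound. Choosing $n$ and $m$ large and combining through a union bound closes the argument.

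The main obstacle is that $\hat f$ is learned from the very samples over which I then form the empirical averages, so $x_i^p$ and $x_j^q$ are not independent of $\hat f$ and the naive concentration step is invalid. I expect to resolve this either by a uniform law of large numbers over the classifier's function class---invoking a complexity bound (VC dimension or Rademacher complexity), which I anticipate is one of Assumptions (A1)--(A4)---so that $\sup_f|\hat{\mathcal{D}}_n(f)-\mathcal{D}(f)|\to 0$, or by sample splitting, training the classifier on one half of the data and evaluating \eqref{plug-in} on an independent half. Coordinating this uniform-convergence control with the classifier-consistency claim so that the approximation and statistical errors shrink simultaneously is the delicate part of the proof.
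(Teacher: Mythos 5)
Your overall architecture matches the paper's: the same two-term split into a statistical error $|\hat{\mathcal{D}}_n(\hat f)-\mathcal{D}(\hat f)|$ and an approximation error $|\mathcal{D}(\hat f)-\mathcal{D}(f^*)|$, the same Lipschitz-of-$\log$/$\exp$ manipulations on the clipped (hence bounded) range of the likelihood ratio, and the same resolution of the dependence problem you flag at the end: the paper's algorithm trains the classifier on one split and evaluates the plug-in estimate on a held-out split, so its Lemma ``Estimation from Samples'' is simply a law-of-large-numbers/Hoeffding statement for the fixed trained parameter $\hat\theta$ on independent test data --- exactly your sample-splitting option, not the uniform-law option.

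The genuine gap is in how you handle the approximation term. You write that ``consistency of the classifier (again one of the assumptions) guarantees $\gamma(x)\to\gamma^*(x)$,'' but no such assumption exists: (A1)--(A4) give only compactly supported densities bounded above and below, clipped predictions, and a compact, Lipschitz-parameterized classifier class. Classifier consistency is precisely what the paper has to prove, and it is the substantive content of its proof. The chain is: (i) a generalization bound via covering numbers of the compact parameter set $\Theta$ (using the Lipschitzness in $\theta$ from (A4)) shows the empirical risk minimizer $\hat\theta$ nearly minimizes the population $\mathrm{BCE}$ over $\Theta$; (ii) the universal approximation theorem shows the best parameter in $\Theta$ nearly achieves the global optimum $\mathrm{BCE}(\gamma^*)$; and (iii) --- the step your proposal has no substitute for --- strong convexity of $\mathrm{BCE}$ as a function of $\gamma$ (the paper's Lemma ``Convergence to minimizer,'' which uses the density lower bound (A2) and the clipping (A3)) converts the loss gap $\mathrm{BCE}(\gamma_{\hat\theta})-\mathrm{BCE}(\gamma^*)\le\epsilon'$ into an $L^1$ bound $\|\gamma_{\hat\theta}-\gamma^*\|_1\le\eta$. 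Without this curvature argument, knowing the classifier's loss is near-optimal says nothing about how close $\gamma_{\hat\theta}$ is to $\gamma^*$, and closeness of $\gamma_{\hat\theta}$ to $\gamma^*$ is exactly the input your Lipschitz bounds on $\log$ and $\exp$ require. So your proof is structurally right but defers its core to an assumption that is not available; to close it you would need to reproduce steps (i)--(iii).
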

\vspace{-0.2in}

\begin{proof}
Here, we provide a sketch of the proof. The classifier is trained to minimize the binary cross entropy (BCE) loss on the train set and obtains the minimizer as $\hat{\theta}$. From generalization bound of classifier, the loss value on the test set from $\hat{\theta}$ is close to the loss obtained by the best optimizer in the classifier family, which itself is close to the global minimizer $\gamma*$ of BCE (as a function $\gamma$) by Universal Function Approximation Theorem of neural-networks. 

The $\mathrm{BCE}$ loss is strongly convex in $\gamma$. $\gamma$ links $\mathrm{BCE}$ to $I(\cdot \,;\, \cdot)$, i.e., $|\mathrm{BCE}_n(\gamma_{\hat{\theta}}) - \mathrm{BCE}(\gamma^*)| \leq \epsilon' \implies \| \gamma_{\hat{\theta}} - \gamma^*\|_1 \leq \eta \implies | \hat{I}_n (X;Y) - I(X;Y)| \leq \epsilon$. 
\end{proof}

While consistency provides a characterization of the estimator in large sample regime, it is not clear what guarantees we obtain for finite samples. The following Theorem shows that even for a small number of samples, the produced MI estimate is a true lower bound on mutual information value with high probability.

\begin{theorem}
Under Assumptions (A1)-(A4), the finite sample estimate from Classifier-MI is a lower bound on the true MI value with high probability, i.e., given $n$ test samples, we have for $\epsilon > 0$
\vspace{-0.1in}
\[
Pr(I(X;Y) + \epsilon \geq \hat{I}_n (X;Y) ) \geq 1 - 2\exp(-Cn)
\]
where $C$ is some constant independent of $n$ and the dimension of the data.
\label{theorem-true-lower-bound}
\end{theorem}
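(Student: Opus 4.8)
The plan is to exploit the fact that the Donsker--Varadhan representation \eqref{DV-bound} is a supremum, so that \emph{every} fixed function already yields a valid lower bound on the divergence. Write $f = \log\mathcal{L}$ for the log-likelihood-ratio function produced by the trained classifier, and crucially treat $f$ as fixed and independent of the evaluation samples; this is guaranteed by the train/test split and is where I would invoke the relevant part of (A1)--(A4), together with boundedness of the likelihood ratio. Setting $p = p(x,y)$, $q = p(x)p(y)$, and $J(f) = \mathbb{E}_{p}[f] - \log \mathbb{E}_{q}[e^{f}]$, the bound \eqref{DV-bound} gives $I(X;Y) = D_{KL}(p\|q) \geq J(f)$ for this particular $f$, with no optimality of the classifier required. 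Since the estimator \eqref{plug-in} is exactly the empirical counterpart $\hat J(f)$, it suffices to control the one-sided deviation $\hat J(f) - J(f)$.

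First I would decompose
\[
\hat{I}_n - I(X;Y) \;\leq\; \hat J(f) - J(f) \;=\; A + B,
\]
where $A = \frac{1}{n}\sum_i f(x_i^p) - \mathbb{E}_p[f]$ is a centred empirical average and $B = \log \mathbb{E}_q[e^{f}] - \log\bigl(\tfrac{1}{m}\sum_j e^{f(x_j^q)}\bigr)$ is the log of the ratio of the true to the empirical mean of $e^f$. For $A$, boundedness of $f$ lets me apply a one-sided Hoeffding inequality to obtain $Pr(A \geq \epsilon/2) \leq \exp(-c_1 n)$, with $c_1 \asymp \epsilon^2/\|f\|_\infty^2$ depending only on the range of $f$.

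The term $B$ is the main obstacle, since it involves the logarithm of an empirical mean and carries precisely the Jensen bias responsible for over-estimation on average (by concavity of $\log$, $\mathbb{E}[\log\hat M]\le \log M$, so $B\ge 0$ in expectation). The key observation is that $B \leq \epsilon/2$ is equivalent to the lower-tail event $\hat M := \tfrac{1}{m}\sum_j e^{f(x_j^q)} \geq M e^{-\epsilon/2}$, where $M = \mathbb{E}_q[e^f]$. Because $e^f$ is bounded, and bounded away from $0$, a one-sided Hoeffding inequality on $\hat M$ controls this lower tail; concretely I would translate the deviation of $\hat M$ through the Lipschitz behaviour of $\log$ on the region $\hat M \geq L > 0$ guaranteed by the lower bound on $e^f$, giving $Pr(B \geq \epsilon/2) \leq \exp(-c_2 m)$. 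Taking $m \asymp n$ and a union bound over the two events then yields $Pr(\hat I_n \geq I(X;Y) + \epsilon) \leq \exp(-c_1 n) + \exp(-c_2 m) \leq 2\exp(-Cn)$ with $C = \min(c_1,c_2)$. Finally I would verify that $c_1$ and $c_2$ depend only on the sup-norm bounds on $f$ and $e^f$ and on $M$ --- all dimension-free under (A1)--(A4) --- which delivers the claimed independence of $C$ from the data dimension.
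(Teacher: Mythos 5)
Your proposal is correct and takes essentially the same route as the paper's own proof: both treat the trained classifier's function (fixed by the train/test split) as a particular candidate in the Donsker--Varadhan supremum, so that $I(X;Y) \geq I^{\gamma_{\hat{\theta}}}(X;Y)$ holds deterministically, and then bound the one-sided deviation of the empirical estimate from this population quantity via two applications of Hoeffding's inequality --- one for the empirical mean of $f$ under $p$ and one for the empirical mean of $e^{f}$ under $q$, converting the latter into a bound on the log term through the Lipschitz constant of $\log$ on the clipped range --- finishing with a union bound to get $1 - 2\exp(-Cn)$ with $C$ depending only on $\epsilon$ and $\tau$, hence dimension-free.
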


\vspace{-0.2in}
\subsection{Probability Calibration}
\vspace{-0.1in}
The estimation of likelihood ratio from classifier predictions $Pr(y=1|x)$ hinges on the fact that the classifier is well-calibrated. As a rule of thumb, classifiers trained directly on the cross entropy loss are well-calibrated. But boosted decision trees would introduce distortions in the likelihood-ratio estimates. There is an extensive literature devoted to obtaining better calibrated classifiers that can be used to improve the estimation further \citep{lakshminarayanan, niculescu, guo2017calibration}. We experimented with Gradient Boosted Decision Trees and multi-layer perceptron trained on the log-loss in our algorithms. Multi-layer perceptron gave better estimates and so is used in all the experiments. Supplementary Figures show that the neural networks used in our estimators are well-calibrated.

Even though logistic regression is well-calibrated and might seem to be an attractive candidate for classification in sparse sample regimes, we show that linear classifiers cannot be used to estimate $D_{KL}$ by two-sample approach. For this, we consider the simple setting of estimating mutual information of two correlated Gaussian random variables as a counter-example. 

\begin{figure*}[t]
\centering
\begin{subfigure}[b]{0.47\textwidth}
\includegraphics[width=\textwidth]{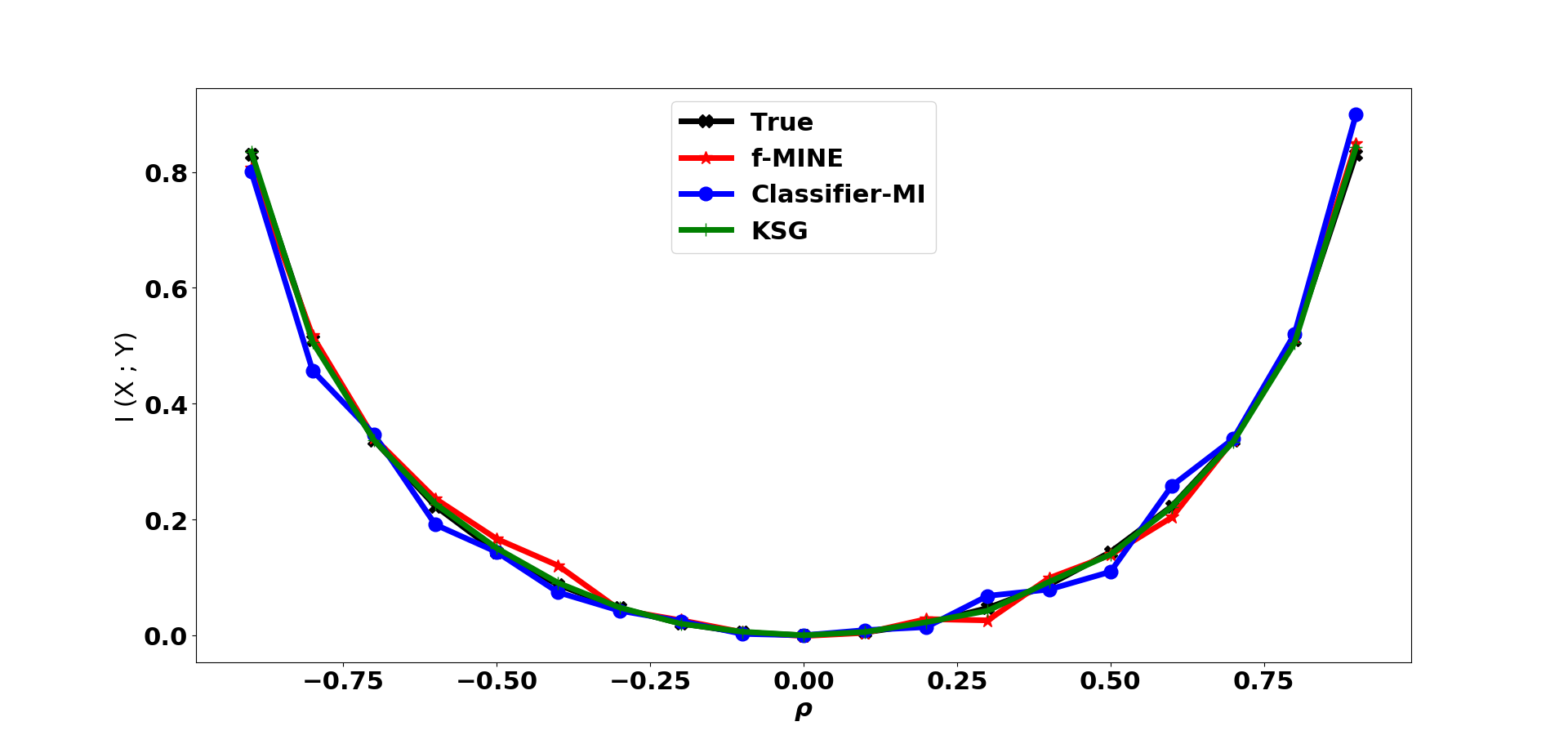} 
\caption{$d_x = d_y = 1$}
\end{subfigure}
\begin{subfigure}[b]{0.47\textwidth}
\includegraphics[width=\textwidth]{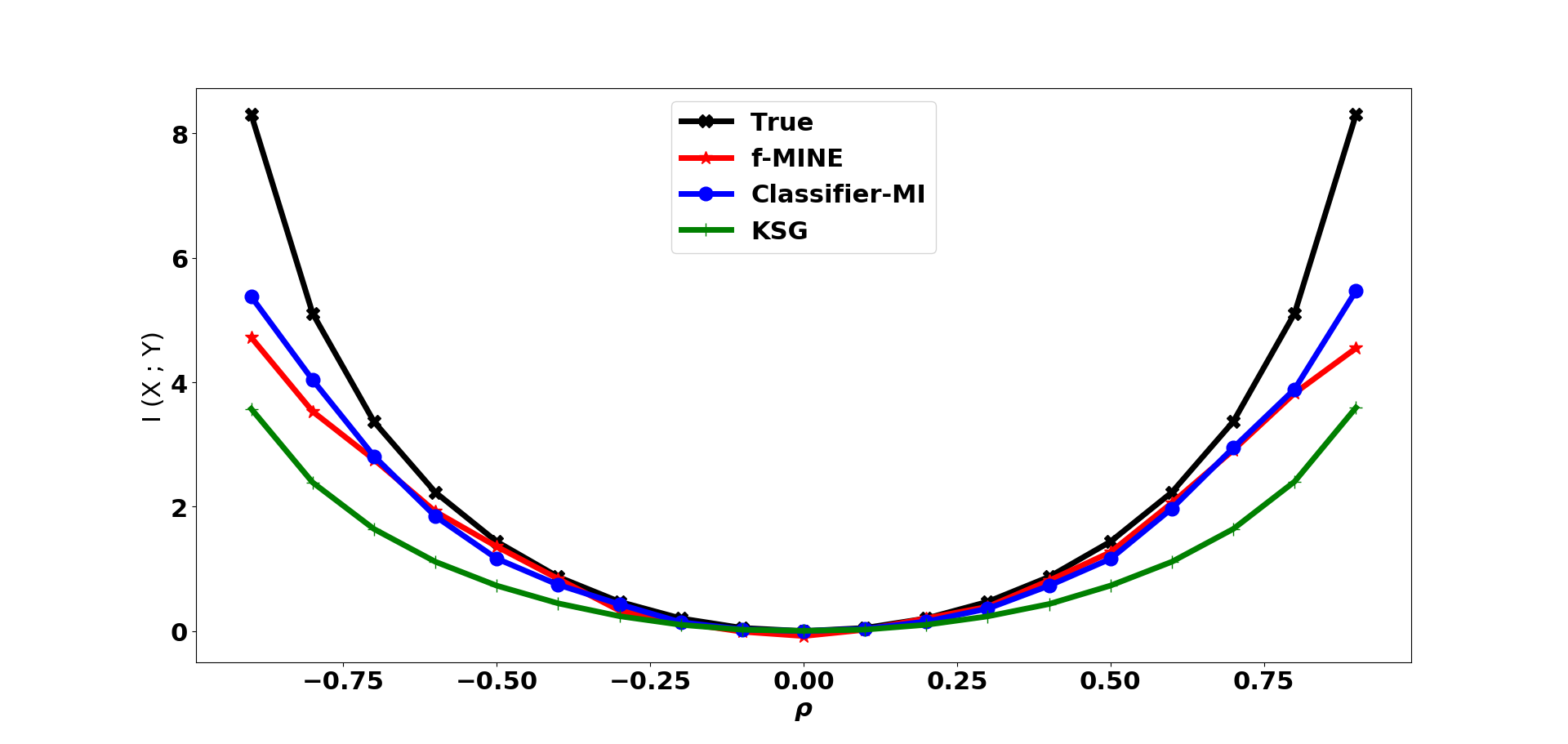} 
\caption{$d_x = d_y = 10$}
\end{subfigure}
\caption{Mutual Information Estimation of Correlated Gaussians : In this setting, $X$ and $Y$ have independent co-ordinates, with ($X_i, Y_i$) $\forall \,i$  being correlated Gaussians with correlation coefficient $\rho$.$\,\, I^*(X;Y) = -\frac{1}{2}d_x \log(1-\rho^2)$}
\label{toy}
\vspace{-0.2in}
\end{figure*}

\begin{lemma}
A linear classifier with marginal features fails the classifier Two sample MI estimation.
\label{lemma-lin}
\end{lemma}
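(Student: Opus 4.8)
The plan is to exhibit the correlated-Gaussian setting of Figure~\ref{toy} as an explicit counterexample and show that the \emph{best} linear classifier on marginal features is forced to be the trivial constant predictor, so that the resulting MI estimate is identically zero regardless of the true correlation. Concretely, take $X,Y$ to be jointly Gaussian standard normals with correlation $\rho$. In the two-sample formulation of Section~\ref{CCMI}, the positive class is drawn from the joint $p(x,y)$ and the negative class from the product of marginals $p(x)p(y)$; ``marginal features'' means the classifier sees the raw concatenation $(x,y)$, and a linear classifier models $\gamma(x,y) = \sigma\!\big(w^{\top}(x,y) + b\big)$ with $\sigma$ the logistic link, so that its induced log-likelihood-ratio score $\log\frac{\gamma}{1-\gamma} = w^{\top}(x,y)+b$ is linear in the features.

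First I would record the structural obstruction via Proposition~\ref{prop-1}: the optimal score is $f^{*}(x,y)=\log\frac{p(x,y)}{p(x)p(y)}$, which for correlated Gaussians is a quadratic form carrying a genuine cross term proportional to $xy/(1-\rho^{2})$. This cross term lies outside the span of linear functions of $(x,y)$, so no linear classifier can represent $f^{*}$. This alone, however, only rules out exact recovery; the real content of the lemma is that the best attainable linear fit collapses to something useless, and establishing this is the step I expect to be the crux.

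To close that gap I would exploit the fact that, by construction, the product distribution shares its marginals with the joint: in particular $\mathbb{E}_{p(x,y)}[(x,y)] = \mathbb{E}_{p(x)p(y)}[(x,y)] = 0$, so every linear score $w^{\top}(x,y)$ has identical (zero) mean under both classes. I would then use that the population binary cross-entropy is convex in the parameters $(w,b)$ and compute its gradient at the origin. Writing the augmented feature as $\phi=(x,y,1)$ and noting $\gamma\equiv 1/2$ there, the gradient is proportional to $\mathbb{E}_{p(x,y)}[\phi]-\mathbb{E}_{p(x)p(y)}[\phi]$, which vanishes precisely because the first moments coincide. Since the Hessian $\mathbb{E}[\gamma(1-\gamma)\phi\phi^{\top}]$ is positive definite for non-degenerate Gaussian features, the loss is strictly convex and this stationary point is the \emph{unique} global minimizer; hence the optimal linear classifier is the constant $\gamma^{*}\equiv 1/2$.

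Finally I would substitute $\gamma\equiv 1/2$ into the plug-in estimator~\eqref{plug-in}: the point-wise likelihood ratio becomes $\mathcal{L}(x)=\gamma/(1-\gamma)=1$ everywhere, giving $\hat{D}_{KL}=\tfrac1n\sum_i \log 1 - \log\!\big(\tfrac1m\sum_j 1\big)=0$, and therefore $\hat{I}_n(X;Y)=0$ independent of $\rho$. Since the true value is $I(X;Y) = -\tfrac12\log(1-\rho^{2})>0$ for any $\rho\neq 0$, the linear classifier provably fails to estimate the mutual information. The main obstacle, as noted, is the global-optimality claim rather than the representability gap; I would dispatch the finite-sample caveat by arguing at the population level, so that the degeneracy reflects the target the classifier is trained toward rather than a sampling artifact, while observing that balanced class sizes make the empirical first moments match in expectation as well.
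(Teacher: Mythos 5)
Your proof is correct, and while it rests on the same counterexample as the paper (correlated Gaussians, true MI $= -\tfrac{1}{2}\log(1-\rho^2) > 0$), its crux is genuinely different --- and more complete. The paper's proof only computes the Bayes-optimal decision rule, $p(x_1,x_2) > p(x_1)p(x_2)$, observes that the resulting boundary is a rectangular hyperbola (hence not representable by a linear classifier), and then \emph{asserts} that the classifier consequently outputs $0.5$ on both classes so that $\hat{D}_{KL} = 0$; that last step is supported only by an empirical supplementary figure, and as you rightly point out, non-representability of the Bayes boundary does not by itself imply that the best linear fit collapses to the trivial predictor. Your moment-matching argument closes exactly this gap: since $\mathbb{E}_{p(x,y)}[(x,y)] = \mathbb{E}_{p(x)p(y)}[(x,y)]$, the gradient of the population logistic loss vanishes at $(w,b)=0$, and positive-definiteness of the Hessian $\mathbb{E}\left[\gamma(1-\gamma)\phi\phi^{\top}\right]$ for non-degenerate features makes this stationary point the unique global minimizer, so the optimal linear classifier is $\gamma \equiv 1/2$ and \eqref{plug-in} returns exactly $0$. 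What each approach buys: the paper's is shorter and gives the geometric picture (the hyperbolic boundary) that motivates using nonlinear classifiers; yours is rigorous and strictly more general --- the first moments of the joint and the product of marginals coincide for \emph{any} distribution, so your argument shows the collapse to the constant predictor in every instance of classifier two-sample MI estimation with raw marginal features, the Gaussian example serving only to exhibit a case where the true MI is strictly positive.
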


\vspace{-0.2in}
\begin{proof}

Consider two correlated Gaussians in $2$ dimensions $(X_1, X_2) \sim \mathcal{N} \left( 0, M =\bigl( \begin{smallmatrix}1 & \rho\\ \rho & 1\end{smallmatrix}\bigr)  \right)$, where $\rho$ is the Pearson correlation. The marginals are standard Gaussans $X_i \sim \mathcal{N}(0, 1)$. Suppose we are trying to estimate the mutual information $D_{KL}(p(x_1, x_2)||p(x_1)p(x_2))$. The classifier decision boundary would seek to find 
$Pr(y=1|x_1, x_2) > Pr(y=0|x_1, x_2)$, thus
$ p(x_1, x_2) > p(x_1)p(x_2) => x_1x_2 >  \frac{1}{2\rho} \log(1-\rho^2)$
\end{proof}
\vspace{-0.1in}
The decision boundary is a rectangular hyperbola. Here the classifier would return $0.5$ as prediction for either class (leading to $\hat{D}_{KL} = 0$), even when $X_1$ and $X_2$ are highly correlated and the mutual information is high. 

We use the Classifier two-sample estimator to first compute the mutual information of two correlated Gaussians \citep{belghazi} for $n = 5,000$ samples. This setting also provides us a way to choose reasonable hyper-parameters that are used throughout in all the synthetic experiments. We also plot the estimates of f-MINE and KSG to ensure we are able to make them work in simple settings. In the toy setting $d_x = 1$, all estimators accurately estimate $I(X;Y)$ as shown in Figure \ref{toy}.
\vspace{-0.1in}
\subsection{Modular Approach to CMI Estimation}
\vspace{-0.1in}
Our classifier based divergence estimator does not encounter an optimization problem involving exponentials. MINE optimizing \eqref{DV-bound} has biased gradients while that based on \eqref{f-bound} is a weaker lower bound \citep{belghazi}. On the contrary, our classifier is trained on cross-entropy loss which has unbiased gradients. Furthermore, we plug in the likelihood ratio estimates into the tighter Donsker-Varadhan bound, thereby, achieving the best of both worlds. Equipped with a KL-divergence estimator, we can now couple it with the generators or use the expression of CMI as a difference of two MIs (which we address from now as MI-Diff.). Algorithm \ref{alg:ccmi} describes the CMI estimation by tying together the generator and Classifier block. For MI-Diff., function block ``Classifier-$D_{KL}$'' in Algorithm \ref{alg:ccmi} has to be used twice : once for estimating $I(X; Y,Z)$ and another for $I(X;Z)$. For mutual information, $\mathcal{D}_q$ in ``Classifier-$D_{KL}$'' is obtained by permuting the samples of $p(\cdot)$.


For the Classifier coupled with a generator, the generated distribution $g(y|z)$ may deviate from the target distribution $p(y|z)$ - introducing a different kind of bias. The following Lemma suggests how such a bias can be corrected by subtracting the KL divergence of the sub-tuple $(Y,Z)$ from the divergence of the entire triple $(X,Y,Z)$. We note that such a clean relationship is not true for general divergence measures, and indeed require more sophisticated conditions for the total-variation metric \citep{sen2018mimic}.
\vspace{-0.05in}
\begin{lemma}[Bias Cancellation]
The estimation error due to incorrect generated distribution $g(y|z)$ can be accounted for using the following relation :
\vspace{-0.1in}
\begin{align*}
&D_{KL}(p(x,y,z)||p(x,z)p(y|z)) = \\
&D_{KL}(p(x,y,z)||p(x,z)g(y|z)) - D_{KL}(p(y,z)||p(z)g(y|z))
\end{align*}
\label{Lemma-2}
\end{lemma}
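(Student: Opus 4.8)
The plan is to treat this as an exact algebraic identity among KL divergences, proved by expanding each of the three terms as an integral of a log-ratio against its reference density and then verifying that the unknown generated conditional $g(y|z)$ cancels. The structural reason the identity holds is that $g$ enters both divergences on the right-hand side \emph{only} through the additive term $\log g(y|z)$, and since this term does not depend on $x$, integrating it against $p(x,y,z)$ produces exactly the same value as integrating it against the $X$-marginal $p(y,z)$. I will assume the mild regularity condition that $g(y|z)>0$ wherever $p(y,z)>0$ (and analogously for the $p$-densities), so that every integrand is well-defined.

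First I would write the two right-hand divergences explicitly as
\[
D_{KL}(p(x,y,z)\|p(x,z)g(y|z)) = \iiint p(x,y,z)\bigl[\log p(x,y,z) - \log p(x,z) - \log g(y|z)\bigr]\,dx\,dy\,dz
\]
and
\[
D_{KL}(p(y,z)\|p(z)g(y|z)) = \iint p(y,z)\bigl[\log p(y,z) - \log p(z) - \log g(y|z)\bigr]\,dy\,dz.
\]
The key step is to subtract the second from the first and invoke the marginalization identity $\iiint p(x,y,z)\,\phi(y,z)\,dx\,dy\,dz = \iint p(y,z)\,\phi(y,z)\,dy\,dz$, valid for any function $\phi$ of $(y,z)$ alone. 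Taking $\phi = \log g(y|z)$ shows the two $g$-dependent contributions are numerically equal and hence cancel exactly; this is precisely the ``bias cancellation'' being asserted. Taking $\phi = \log p(y,z) - \log p(z) = \log p(y|z)$ lets me re-express the surviving terms of the second divergence as an integral against $p(x,y,z)$ instead of $p(y,z)$.

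After these substitutions I would collect all remaining terms under a single integral against $p(x,y,z)$, yielding $\iiint p(x,y,z)\log\frac{p(x,y,z)}{p(x,z)p(y|z)}\,dx\,dy\,dz$, which by Definition of the KL divergence is exactly $D_{KL}(p(x,y,z)\|p(x,z)p(y|z))$, the left-hand side. The computation is essentially routine, so there is no deep obstacle; the only genuinely substantive point is recognizing and justifying the cancellation of the $\log g(y|z)$ terms via marginalization over $x$, together with checking the support condition so that those cancelling integrals are finite. I would emphasize in the write-up that this clean cancellation is special to the KL divergence (as the surrounding text notes, it fails for general divergences such as total variation), because KL decomposes additively into a sum of log-terms that interact linearly with marginalization.
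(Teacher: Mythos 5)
Your proof is correct: expanding the two right-hand divergences and using that $\log g(y|z)$ is constant in $x$ (so its integral against $p(x,y,z)$ equals its integral against the marginal $p(y,z)$) gives exactly the claimed cancellation, and the surviving terms recombine into $D_{KL}(p(x,y,z)\|p(x,z)p(y|z))$. Note that the paper states this lemma without any proof at all, so there is no paper argument to compare against; your expansion-and-marginalization derivation, together with the support/finiteness caveat you flag (needed so the subtraction is not an $\infty - \infty$ rearrangement), is precisely the intended one and fills the gap the paper leaves implicit.
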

\vspace{-0.05in}

\begin{algorithm*}
\DontPrintSemicolon
\SetAlgoLined
\KwIn{Dataset $\mathcal{D} = \{x_i, y_i, z_i \}_{i=1}^n$, number of outer boot-strap iterations $B$, Inner iterations $T$, clipping constant $\tau$.}
\KwOut{CMI estimatate $\hat{I}(X;Y|Z)$}
\For{$b \in \{ 1, 2, \ldots B \}$}{
Permute the points in dataset $\mathcal{D}$ to obtain $\mathcal{D}^\pi$. \;
Split $\mathcal{D}^\pi$ equally into two parts $\mathcal{D}_\mathrm{class, joint} = \{x_i, y_i, z_i\}_{i=1}^{n/2}$ and $\mathcal{D}_\mathrm{gen} = \{x_i, y_i, z_i\}_{i=n/2}^{n}$. \;
Train the generator $\mathcal{G}(\cdot)$ on $\mathcal{D}_\mathrm{gen}$. \;
Generate the marginal data-set using points $y'_i = \mathcal{G}(z_i) \, \forall\, z_i \in \mathcal{D}_\mathrm{class, joint}(:,Z)$. $\mathcal{D}_\mathrm{class, marg} = \{x_i, y'_i, z_i\}_{i=1}^{n/2}$ \;
    $\hat{I}_b(X;Y|Z) = \mathrm{Classifier\_D_{KL}}(\mathcal{D}_\mathrm{class, joint}, \mathcal{D}_\mathrm{class, marg}, T, \tau)$
}
\Return{$\frac{1}{B} \sum_b \hat{I}_b(X;Y|Z)$}\;
\SetKwFunction{FMain}{$\mathrm{Classifier\_D_{KL}}$}
  \SetKwProg{Fn}{Function}{:}{}
  \Fn{\FMain{$\mathcal{D}_p, \mathcal{D}_q, T, \tau$}}{
       Label points $u \in \mathcal{D}_p$ as $l=1$ and $v \in \mathcal{D}_q$ as $l=0$. \;
\For{$t \in \{ 1, 2, \ldots T \}$}{
    $\mathcal{D}_p^{\mathrm{train}}, \mathcal{D}_p^{\mathrm{eval}} \leftarrow$ {\sc split\_test\_train}($\mathcal{D}_p$). \;
     $\mathcal{D}_q^{\mathrm{train}}, \mathcal{D}_q^{\mathrm{eval}} \leftarrow$ {\sc split\_test\_train}($\mathcal{D}_q$)  \;
    Train classifier $\mathcal{C}$ on $\{ \mathcal{D}_p^{\mathrm{train}} ,\vec{1} \}, \{ \mathcal{D}_q^{\mathrm{train}} ,\vec{0} \}$\;
    Obtain classifier predictions $Pr(l=1|w) \, \forall \,w \in \mathcal{D}_p^{\mathrm{eval}} \cup \mathcal{D}_q^{\mathrm{eval}}$, and clip to $[\tau, 1-\tau]$.\;
    $\hat{D}_{KL}^t(p || q) \leftarrow \frac{1}{|\mathcal{D}_p^{\mathrm{eval}}|} \sum\limits_{u \in \mathcal{D}_p^{\mathrm{eval}}} \log \frac{Pr(l=1|u)}{1-Pr(l=1|u)} - \log \left( \frac{1}{|\mathcal{D}_q^{\mathrm{eval}}|} \sum\limits_{v \in \mathcal{D}_q^{\mathrm{eval}}}  \frac{Pr(l=1|v)}{1-Pr(l=1|v)}   \right)$  \;    
    }
        \KwRet $\hat{D}_{KL}(p || q)  = \frac{1}{\tau} \sum_t \hat{D}_{KL}^t(p || q) $ \;
  }
\caption{{\sc Generator + Classifier}}
\label{alg:ccmi}
\end{algorithm*}

\vspace{-0.4in}

\section{Experimental Results}
\vspace{-0.1in}
In this Section, we compare the performance of various estimators on the CMI estimation task. We used the Classifier based divergence estimator (Section 3) and MINE in our experiments. \cite{belghazi} had two MINE variants, namely Donsker-varadhan (DV) MINE and f-MINE. The f-MINE has unbiased gradients and we found it to have similar performance as DV-MINE, albeit with lower variance. So we used f-MINE in all our experiments. 

The ``generator''+``Divergence estimator'' notation will be used to denote the various estimators. For instance, if we use CVAE for the generation and couple it with f-MINE, we denote the estimator as CVAE+f-MINE. When coupled with the Classifier based Divergence block, it will be denoted as CVAE+Classifier. For MI-Diff. we represent it similarly as MI-Diff.+``Divergence estimator''. 

We compare our estimators with the widely used KSG estimator.\footnote{The implementation of CMI estimator in Non-parametric Entropy Estimation Toolbox (\url{https://github.com/gregversteeg/NPEET}) is used.} For f-MINE, we used the code provided to us by the author \citep{belghazi}. The same hyper-parameter setting is used in all our synthetic data-sets for all estimators (including generators and divergence blocks). Supplementary contains the details about the hyper-parameter values. For KSG, we vary $k \in \{3, 5, 10\}$ and report the results for the best $k$ for each data-set.
\vspace{-0.1in}
\subsection{Linear Relations}
\vspace{-0.1in}
\begin{figure*}[t]
\centering
\begin{subfigure}[b]{0.47\textwidth}
\includegraphics[width=\textwidth]{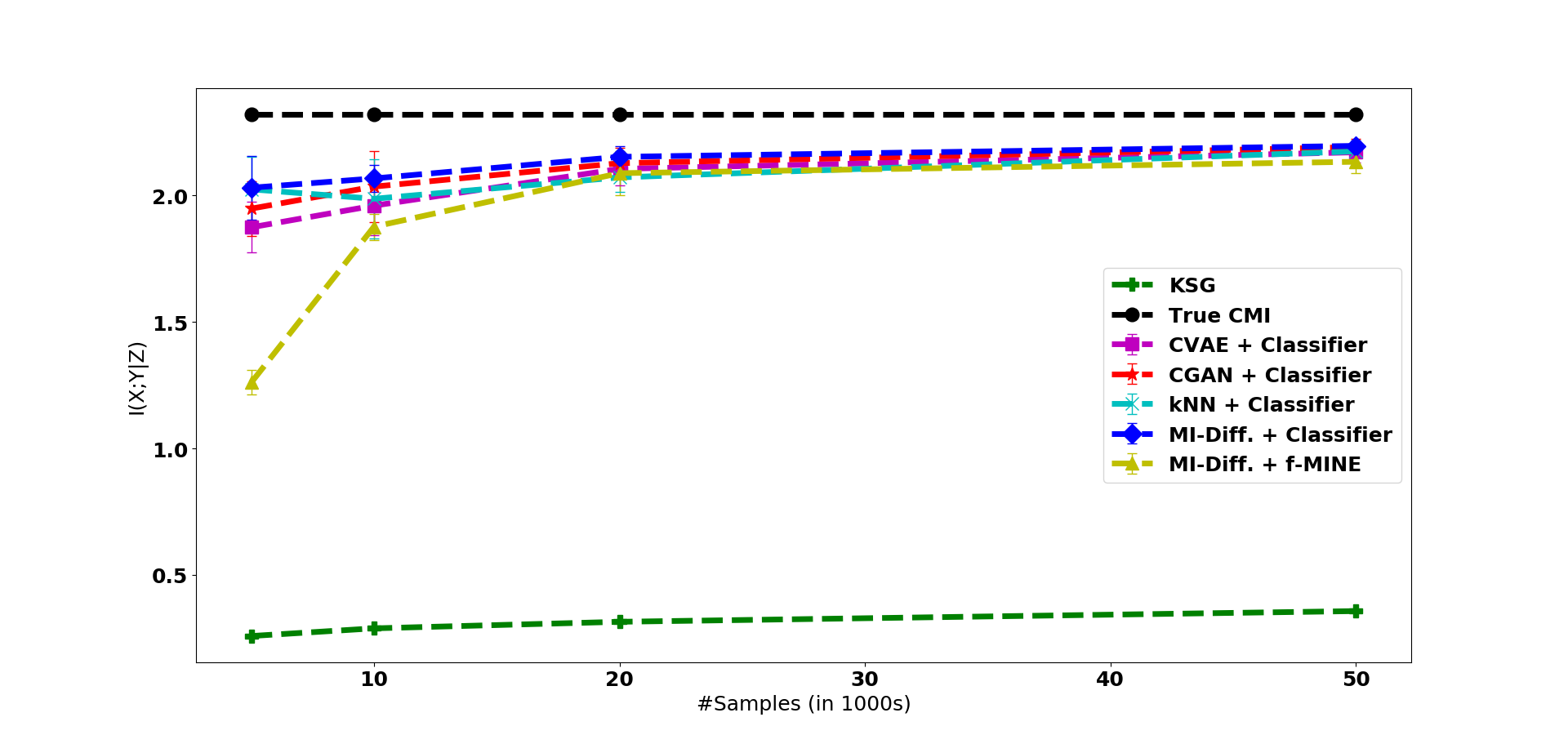}
\caption{Model I : Variation with $n$, $d_z = 20$}
\end{subfigure}
\begin{subfigure}[b]{0.47\textwidth}
\includegraphics[width=\textwidth]{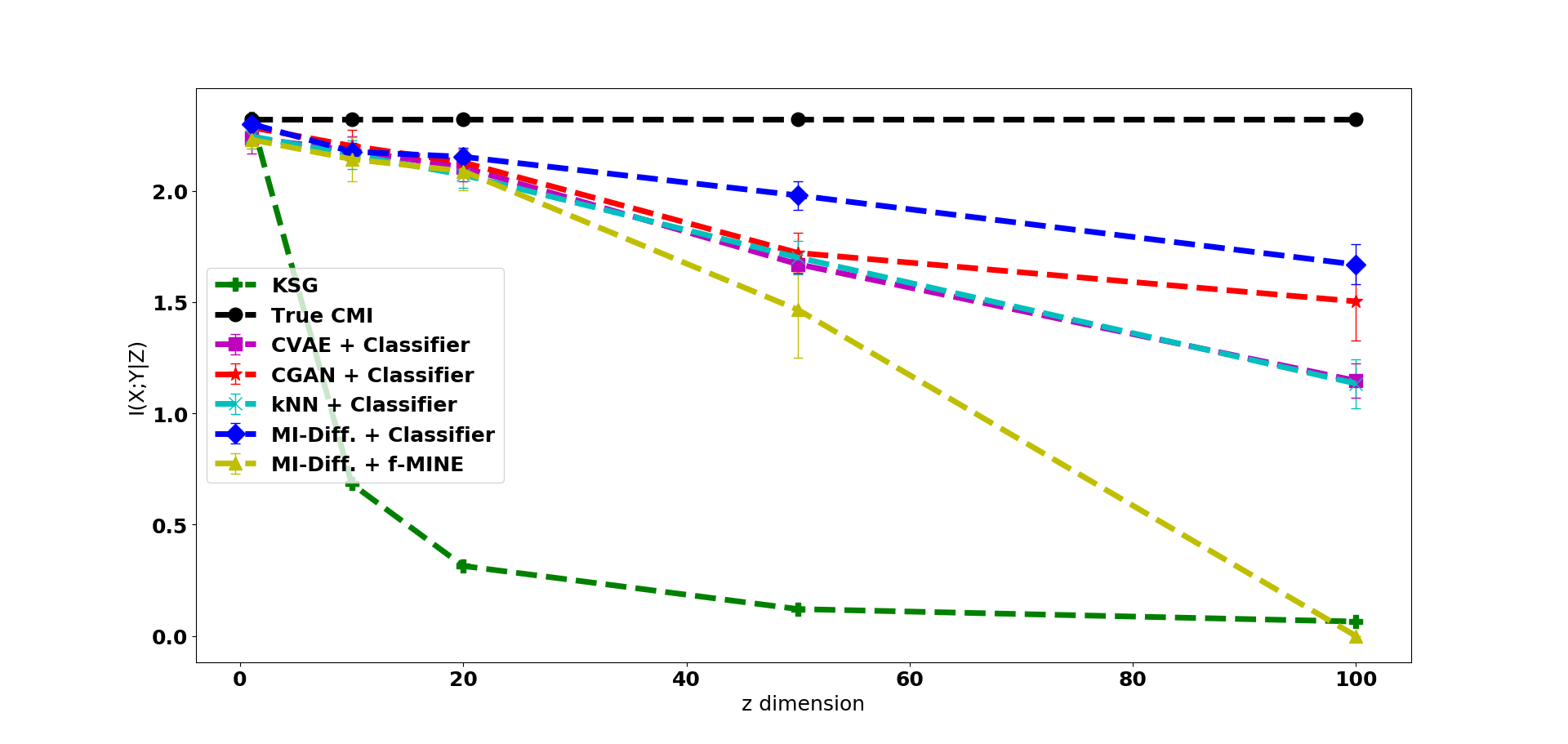}
\caption{Model I : Variation with $d_z$, $n = 20,000$}
\end{subfigure}
\begin{subfigure}[b]{0.47\textwidth}
\includegraphics[width=\textwidth]{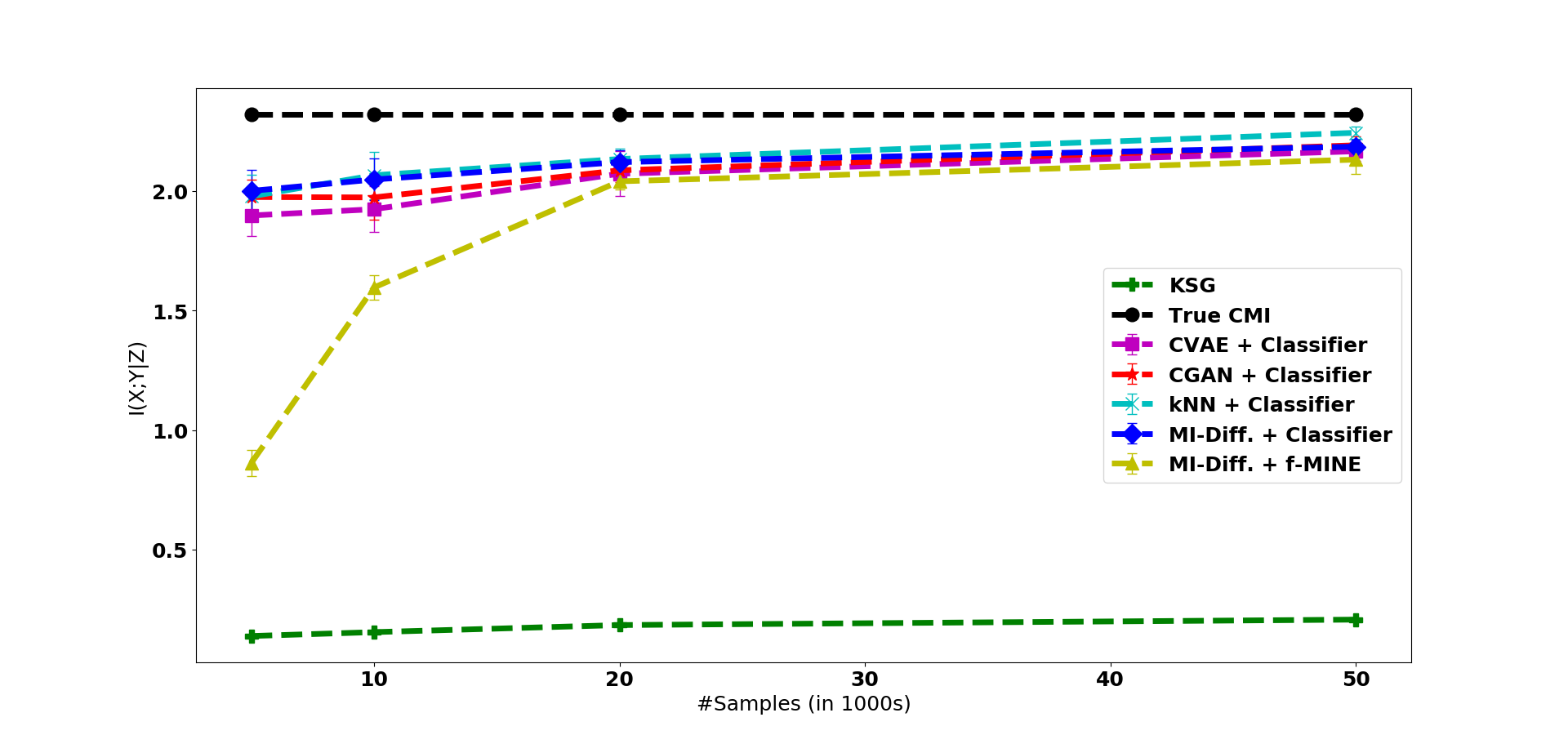}
\caption{Model II : Variation with $n$, $d_z = 20$}
\end{subfigure}
\begin{subfigure}[b]{0.47\textwidth}
\includegraphics[width=\textwidth]{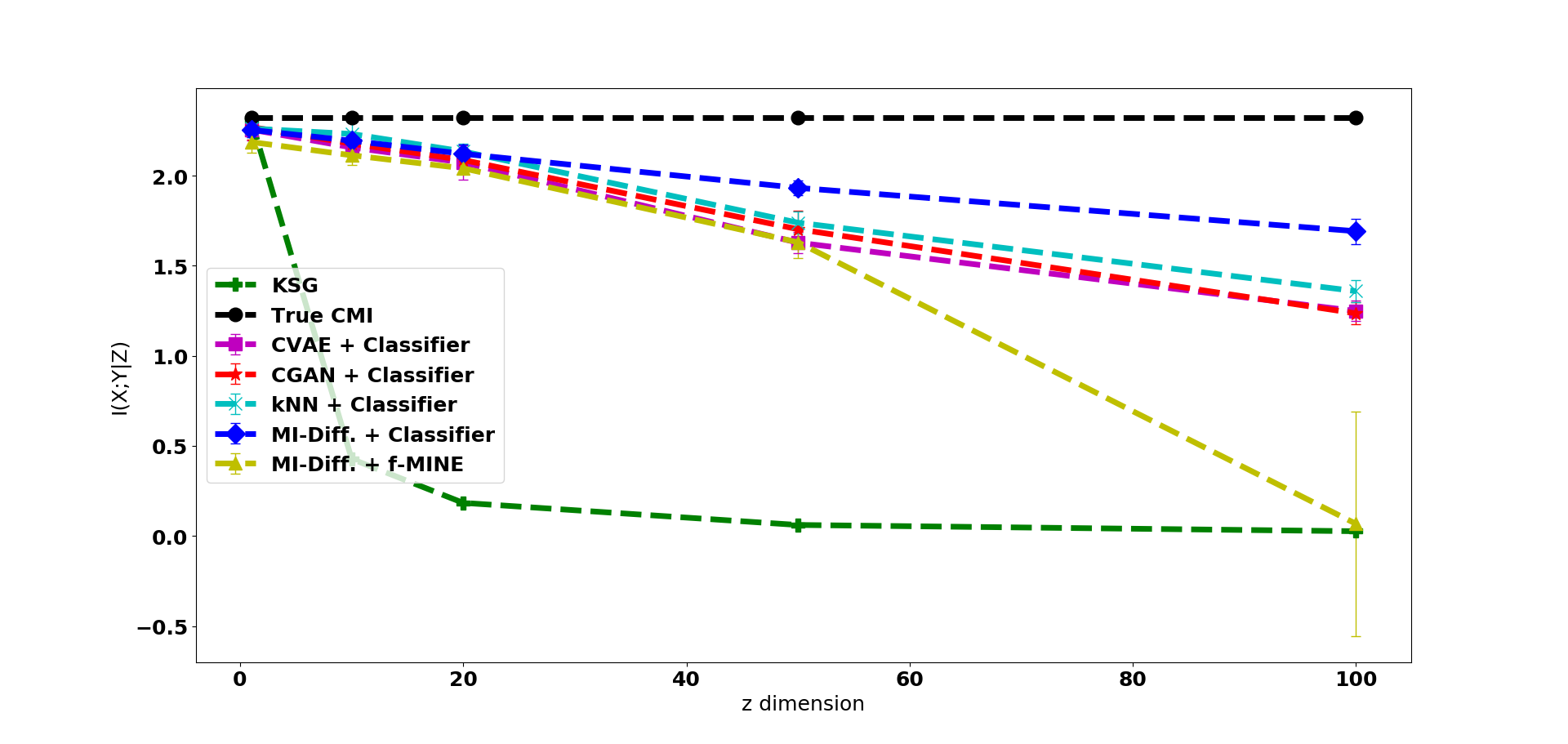}
\caption{Model II : Variation with $d_z$, $n = 20,000$}
\end{subfigure}
\caption{CMI Estimation in Linear models : We study the effect of various estimators as either number of samples $n$ or dimension $d_z$ is varied. MI-Diff.+Classifier performs the best among our estimators, while all our proposed estimators improve the estimation significantly over KSG. Average of $10$ runs is plotted. Error bars depict $1$ standard deviation from mean. (Best viewed in color)}
\label{lin-syn}
\end{figure*} 

We start with the simple setting where the three random variables $X$, $Y$, $Z$ are related in a linear fashion. We consider the following two linear models. 
\vspace{-0.1in}
\begin{table}[h]
\caption{Linear Models}
\label{lin-model}
\begin{center}
\begin{tabular}{ll}
\multicolumn{1}{c}{\bf Model I}  &\multicolumn{1}{c}{\bf Model II} \\
\hline \\
$X \sim \mathcal{N}(0,1)$          &$X \sim \mathcal{N}(0,1)$ \\
$Z \sim \mathcal{U}(-0.5, 0.5)^{d_z}$              &$Z \sim \mathcal{N}(0, 1)^{d_z}$ \\
&$U = w^TZ, \|w\|_1 = 1$  \\
$\epsilon \sim \mathcal{N}(Z_1, \sigma_\epsilon^2) $             &$\epsilon \sim \mathcal{N}(U, \sigma_\epsilon^2)$  \\
$Y \sim X + \epsilon$             &$Y \sim X + \epsilon$  \\
\end{tabular}
\end{center}
\end{table}

where $\mathcal{U}(-0.5, 0.5)^{d_z}$ means that each co-ordinate of $Z$ is drawn i.i.d from a uniform distribution between $-0.5$ and $0.5$. Similar notation is used for the Gaussian : $\mathcal{N}(0, 1)^{d_z}$. $Z_1$ is the first dimension of $Z$. We used $\sigma_\epsilon = 0.1$ and obtained the constant unit norm random vector $w$ from $\mathcal{N}(0,I_{d_z})$. $w$ is kept constant for all points during data-set preparation.

As common in literature on causal discovery and independence testing \citep{ccit, kcit}, the dimension of $X$ and $Y$ is kept as $1$, while $d_z$ can scale. Our estimators are general enough to accommodate multi-dimensional $X$ and $Y$, where we consider a concatenated vector $X = (X_1, X_2,\ldots, X_{d_x})$ and $Y = (Y_1, Y_2,\ldots, Y_{d_y})$. This has applications in learning interactions between Modules in Bayesian networks \citep{segal} or dependence between group variables \citep{entner, parviainen} such as distinct functional groups of proteins/genes instead of individual entities. Both the linear models are representative of problems encountered in Graphical models and independence testing literature. In Model I, the conditioning set can go on increasing with independent variables $\{Z_k\}_{k=2}^{d_z}$, while $Y$ only depends on $Z_1$. In Model II, we have the variables in the conditioning set combining linearly to produce $Y$. It is also easy to obtain the ground truth CMI value in such models by numerical integration.

For both these models, we generate data-sets with varying number of samples $n$ and varying dimension $d_z$ to study their effect on estimator performance. The sample size is varied as $n \in \{5000, 10000, 20000, 50000\}$ keeping $d_z$ fixed at $20$. We also vary $d_z \in \{1, 10, 20, 50, 100\}$, keeping sample size fixed at $n = 20000$.

Several observations stand out from the experiments: (1) KSG estimates are accurate at very low dimension but drastically fall with increasing $d_z$ even when the conditioning variables are completely independent and do not influence $X$ and $Y$ (Model-I).
(2) Increasing the sample size does not improve KSG estimates once the dimension is kept moderate (even $20$!). The dimension issue is more acute than sample scarcity.
(3) The estimates from f-MINE have greater deviation from the truth at low sample sizes. At high dimensions, the instability is clearly portrayed when the estimate suddenly goes negative (Truncated to $0.0$ to maintain the scale of the plot). (4) All our estimators using Classifier are able to obtain reasonable estimates even at dimensions as high as $100$, with MI-Diff.+Classifier performing the best.

\vspace{-0.1in}
\subsection{Non-Linear Relations}
\vspace{-0.1in}
\begin{figure*}[t]
\centering
\begin{subfigure}[b]{0.47\textwidth}
\includegraphics[width=\textwidth]{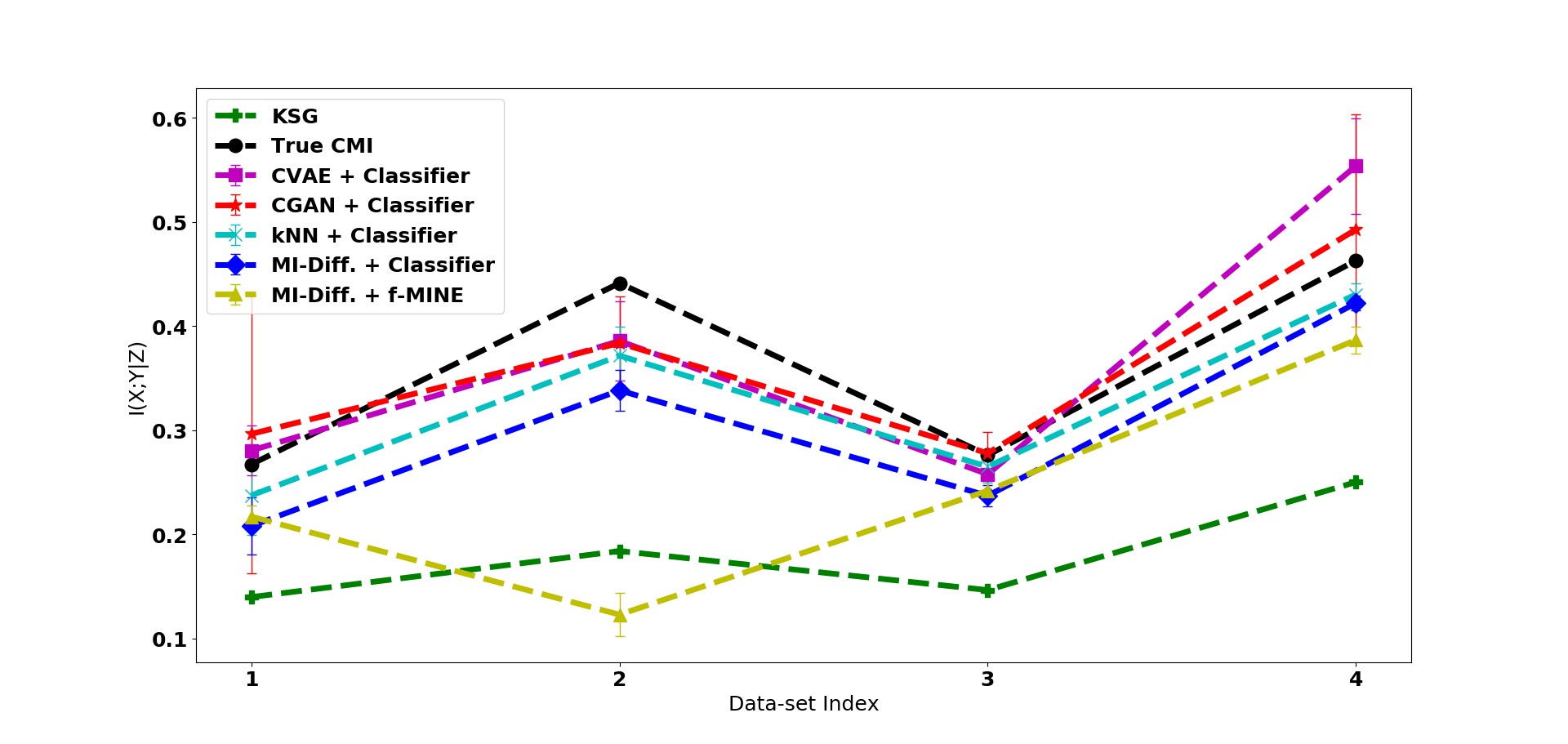} 
\caption{Non-linear Model : Number of samples increase with \\Data-index, $d_z = 10$ (fixed)}
\end{subfigure}
\begin{subfigure}[b]{0.47\textwidth}
\includegraphics[width=\textwidth]{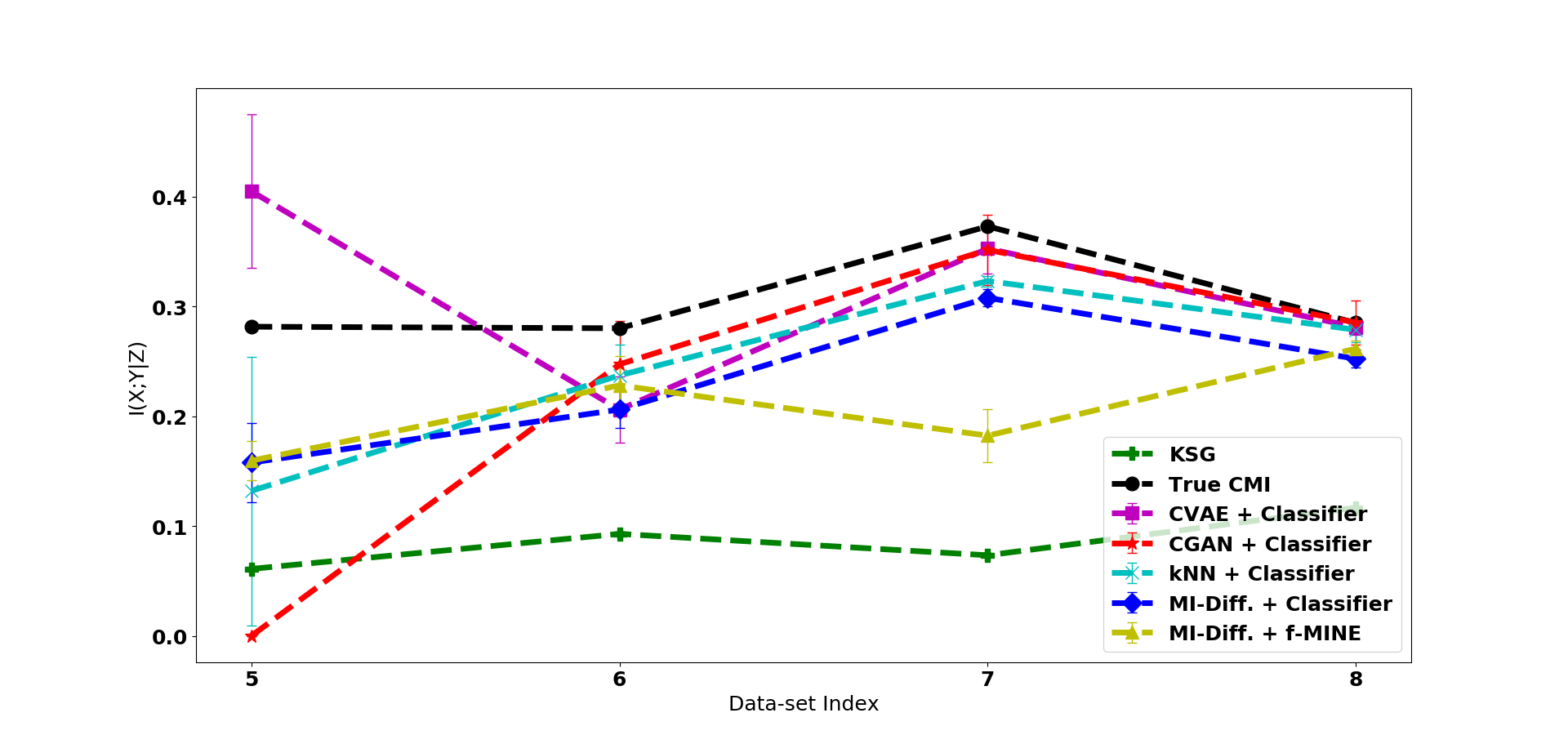}
\caption{Non-linear Model : Number of samples increase with \\Data-index, $d_z = 20$ (fixed)}
\end{subfigure}
\begin{subfigure}[b]{\textwidth}
\center 
\includegraphics[width=0.7\textwidth]{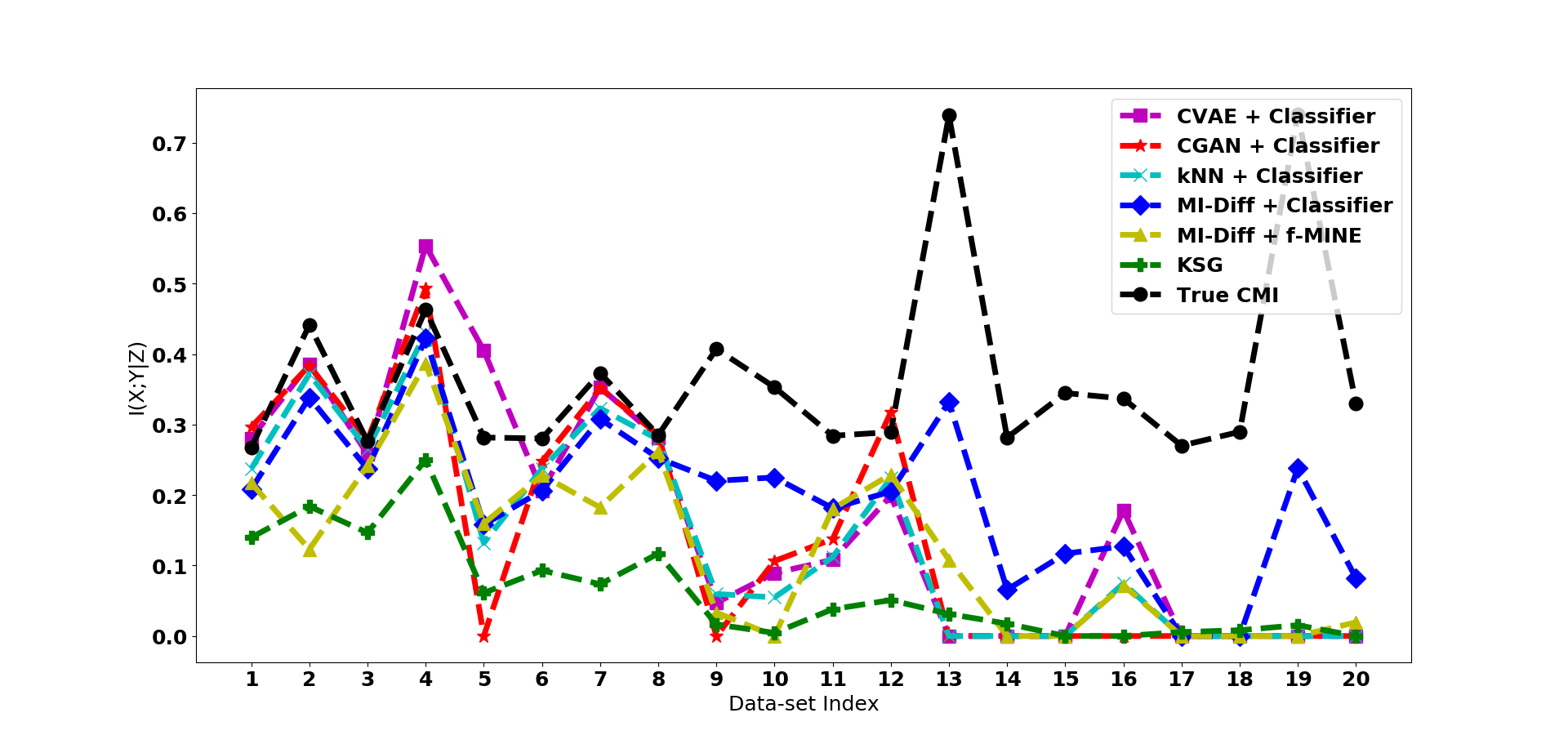}
\caption{Non-linear Models (All $20$ data-sets)}
\end{subfigure}
\caption{On non-linear data-sets, a similar trend is observed. KSG under-estimates $I^*(X;Y|Z)$, while our estimators track it closely. Average over $10$ runs is plotted. (Best viewed in color)}
\label{non-lin-syn}
\end{figure*}

Here, we study models where the underlying relations between $X$, $Y$ and $Z$ are non-linear. Let $Z \sim \mathcal{N}(\mathbbm{1}, I_{d_z}), X = f_1 (\eta_1), Y = f_2(A_{zy} Z + A_{xy} X + \eta_2)$. $f_1$ and $f_2$ are non-linear bounded functions drawn uniformly at random from $\{cos(\cdot), tanh(\cdot), \exp(-|\cdot|)\}$ for each data-set. $A_{zy}$ is a random vector whose entries are drawn $\mathcal{N}(0,1)$ and normalized to have unit norm. The vector once generated is kept fixed for a particular data-set. We have the setting where $d_x = d_y = 1$ and $d_z$ can scale. $A_{xy}$ is then a constant. We used $A_{xy} = 2$ in our simulations. The noise variables $\eta_1, \eta_2$ are drawn i.i.d $\mathcal{N}(0,\sigma_\epsilon^2)$, $\sigma_\epsilon^2 = 0.1$.

We vary $n \in \{5000, 10000, 20000, 50000 \}$ across each dimension $d_z$. The dimension $d_z$ itself is then varied as $\{10, 20, 50, 100, 200\}$ giving rise to $20$ data-sets. Data-index $1$ has $n = 5000, d_z = 10$, data-index $2$ has $n = 10000, d_z = 10$ and so on until data-index $20$ with $n = 50000, d_z = 200$.

\textbf{Obtaining Ground Truth $I^*(X;Y|Z)$} : Since it is not possible to obtain the ground truth CMI value in such complicated settings using a closed form expression, we resort to using the relation $I(X;Y|Z) = I(X;Y|U)$ where $U = A_{zy} Z$. The dependence of $Y$ on $Z$ can be completely captured once $U$ is given. But, $U$ has dimension $1$ and can be estimated accurately using KSG. We generate $50000$ samples separately for each data-set to estimate $I(X;Y|U)$ and use it as the ground truth.

\begin{figure*}[t]
\centering
\begin{subfigure}[b]{0.47\textwidth}
\includegraphics[width=\textwidth]{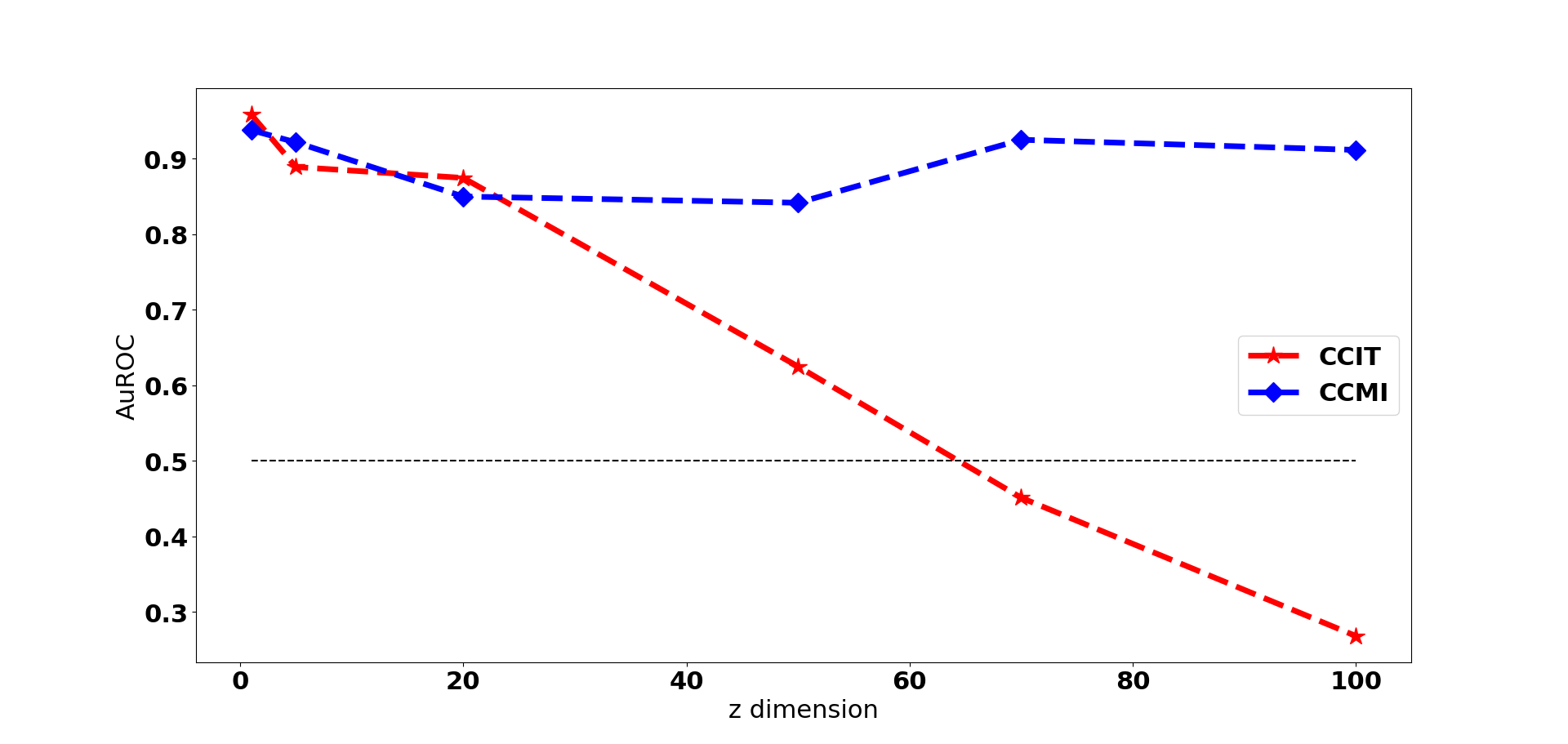}
\caption{CCIT performance degrades with increasing dz; \\ CCMI
retains high AuROC score even at dz = 100.}
\end{subfigure}
\begin{subfigure}[b]{0.47\textwidth}
\includegraphics[width=\textwidth]{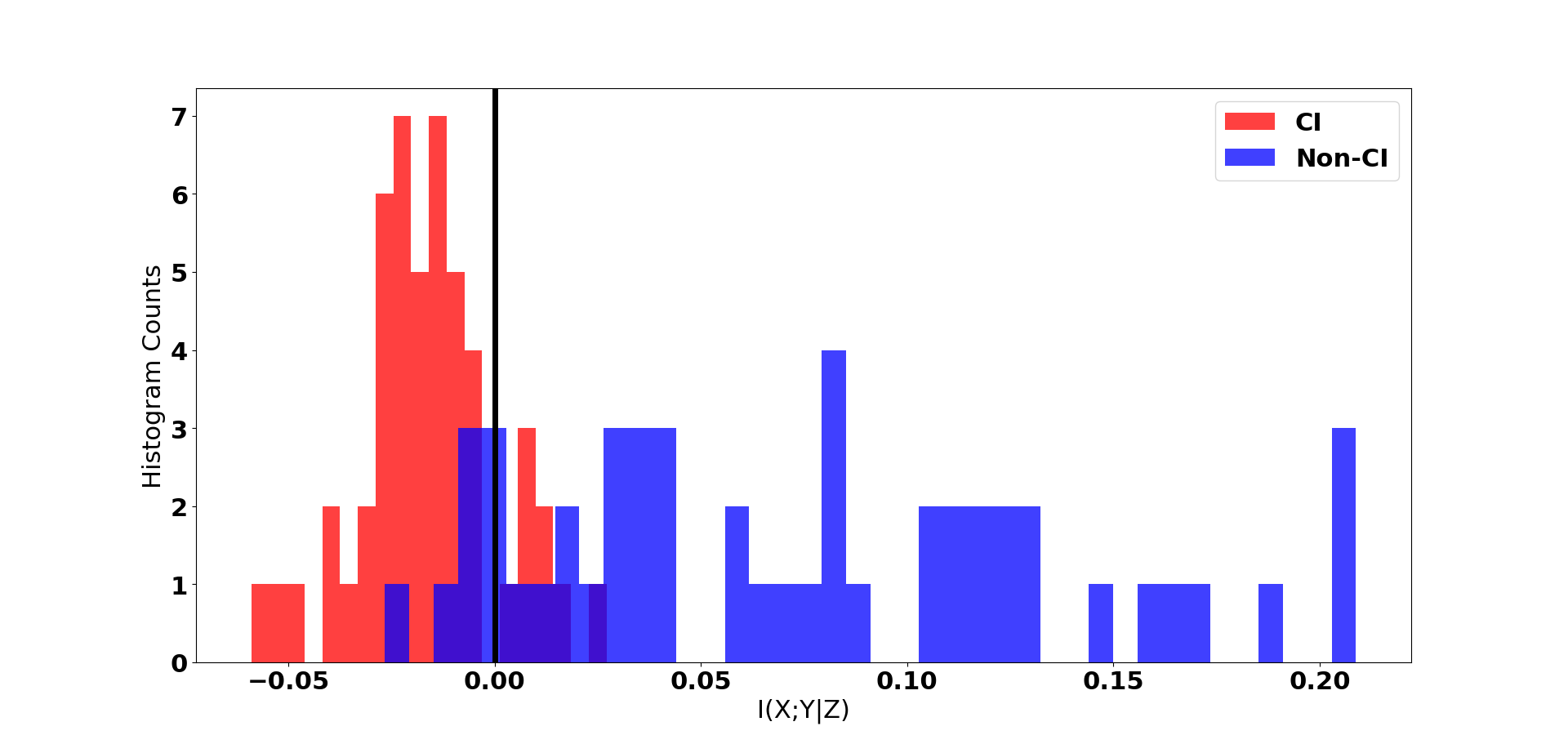} 
\caption{Estimates for CI data-sets are $\leq0$ and those for non-CI are $>0$ at $d_z = 100$. Thresholding CMI estimates at $0$ yields Precision = $0.84$, Recall = $0.86$.} 
\label{cmi-stat}
\end{subfigure}
\caption{Conditional Independence Testing in Post Non-linear Synthetic Data-set}
\label{postNonLin-CIT}
\vspace{-0.2in}
\end{figure*}

We observed similar behavior (as in Linear models) for our estimators in the Non-linear setting. 

(1) KSG continues to have low estimates even though in this setup the true CMI values are themselves low ($<1.0$).
(2) Up to $d_z = 20$, we find all our estimators closely tracking $I^*(X;Y|Z)$. But in higher dimensions, they fail to perform accurately. 
(3) MI-Diff. + Classifier is again the best estimator, giving CMI estimates away from $0$ even at $200$ dimensions. 

From the above experiments, we found MI-Diff.+Classifier to be the most accurate and stable estimator. We use this combination for our downstream applications and henceforth refer to it as CCMI. 

\begin{figure}
\includegraphics[width=0.47\textwidth]{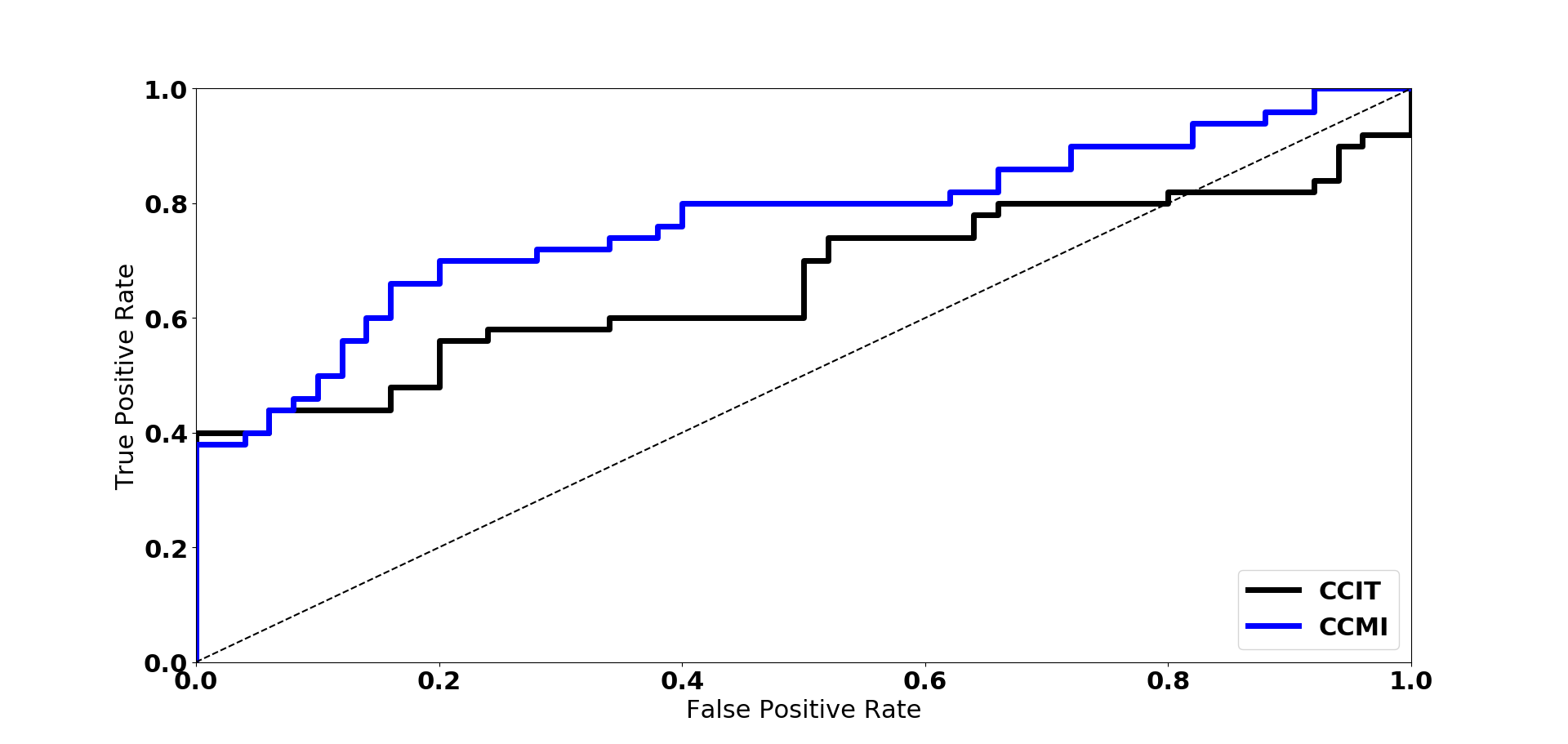} 
\caption{AuROC Curves : Flow-Cytometry Data-set. CCIT obtains a mean AuROC score of $0.6665$, while CCMI out-performs with mean of $0.7569$.}
\label{flowCyto-CIT}
\vspace{-0.2in}
\end{figure}
\vspace{-0.2in}
\section{Application to Conditional Independence Testing}
\vspace{-0.1in}
As a testimony to accurate CMI estimation, we apply CCMI to the problem of Conditional Independence Testing(CIT). Here, we are given samples from two distributions $p(x,y,z)$ and $q(x,y,z) = p(x,z)p(y|z)$. The hypothesis testing in CIT is to distinguish the null $\mathcal{H}_0 : X \perp Y |Z$ from the alternative $\mathcal{H}_1 : X \not\perp Y | Z$.

We seek to design a CIT tester using CMI estimation by using the fact that $I(X;Y|Z) = 0 \, \iff X \perp Y|Z$. A simple approach would be to reject the null if $I(X;Y|Z) > 0$ and accept it otherwise. The CMI estimates can serve as a proxy for the $P$-value. CIT testing based on CMI Estimation has been studied by \cite{runge2017conditional}, where the author uses KSG for CMI estimation and use $k$-NN based permutation to generate a $P$-value. The $P$-value is computed as the fraction of permuted data-sets where the CMI estimate is $\geq$ that of the original data-set. The same approach can be adopted for CCMI to obtain a $P$-value. But since we report the AuROC (Area under the Receiver Operating Characteristic curve), CMI estimates suffice.
\vspace{-0.2in}
\subsection{Post Non-linear Noise : Synthetic Data}
\vspace{-0.1in}
In this experiment, we generate data based on the post non-linear noise model similar to \cite{ccit}. As before, $d_x = d_y=1$ and $d_z$ can scale in dimension. The data is generated using the follow model. 
\begin{align*}
&Z \sim \mathcal{N}(\mathbbm{1}, I_{d_z}), \,\, X = cos(a_x Z + \eta_1) \\
&Y = 
\begin{cases}
cos(b_y Z + \eta_2) \qquad \qquad \,\,\,\textrm{if}  X \perp Y |Z \\
cos(c X + b_yZ + \eta_2) \qquad \textrm{if} \,\, X \not\perp Y |Z \\
\end{cases}
\end{align*}
The entries of random vectors(matrices if $d_x, d_y > 1$) $a_x$ and $b_y$ are drawn $\sim \mathcal{U}(0,1)$ and the vectors are normalized to have unit norm, i.e., $\|a\|_2 = 1, \|b\|_2 = 1$. $c \sim \mathcal{U}[0,2], \eta_i \sim \mathcal{N}(0, \sigma_e^2), \sigma_e = 0.5$. This is different from the implementation in \cite{ccit} where the constant is $c = 2$ in all data-sets. But by varying $c$, we obtain a tougher problem where the true CMI value can be quite low for a dependent data-set.

$a_x, b_y$ and $c$ are kept constant for generating points for a single data-set and are varied across data-sets. We vary $d_z \in \{1, 5, 20, 50, 70, 100 \}$ and simulate $100$ data-sets for each dimension. The number of samples is $n = 5000$ in each data-set. Our algorithm is compared with the state-of-the-art CIT tester in \cite{ccit}, known as CCIT. We used the implementation provided by the authors and ran CCIT with $B=50$ bootstraps \footnote{\url{https://github.com/rajatsen91/CCIT}}. For each data-set, an AuROC value is obtained. Figure \ref{postNonLin-CIT} shows the mean AuROC values from $5$ runs for both the testers as $d_z$ varies. While both algorithms perform accurately upto $d_z = 20$, the performance of CCIT starts to degrade beyond $20$ dimensions. Beyond $50$ dimensions, it performs close to random guessing. CCMI retains its superior performance even at $d_z = 100$, obtaining a mean AuROC value of $0.91$. 

Since AuROC metric finds best performance by varying thresholds, it is not clear what precision and recall is obtained from CCMI when we threshold the CCMI estimate at $0$ (and reject or accept the null based on it). So, for $d_z = 100$ we plotted the histogram of CMI estimates separately for CI and non-CI data-sets. Figure \ref{cmi-stat} shows that there a clear demarcation of CMI estimates between the two data-set categories and choosing the threshold as $0.0$ gave the precision as $0.84$ and recall as $0.86$.
\vspace{-0.2in}
\subsection{Flow-Cytometry : Real Data}
\vspace{-0.1in}
To extend our estimator beyond simulated settings, we use CMI estimation to test for conditional independence in the protein network data used in \cite{ccit}. The consensus graph in \cite{sachs} is used as the ground truth. We obtained $50$ CI and $50$ non-CI relations from the Bayesian network. The basic philosophy used is that a protein $X$ is independent of all other proteins $Y$ in the network given its parents, children and parents of children. Moreover, in the case of non-CI, we notice that a direct edge between $X$ and $Y$ would never render them conditionally independent. So the conditioning set $Z$ can be chosen at random from other proteins. These two settings are used to obtain the CI and non-CI data-sets. The number of samples in each data-set is only $853$ and the dimension of $Z$ varies from $5$ to $7$. 

For Flow-Cytometry data, since the number of samples is too small, we train the Classifier for fewer epochs to prevent over-fitting, keeping every other hyper-parameter the same. CCMI is compared with CCIT on the real data and the mean AuROC curves from $5$ runs is plotted in Figure \ref{flowCyto-CIT}. The superior performance of CCMI over CCIT is retained in sparse data regime.

\vspace{-0.2in}
\section{Conclusion and Future Directions}
\vspace{-0.15in}
In this work we explored various CMI estimators by drawing from recent advances in generative models and classifies. We proposed a new divergence estimator, based on Classifier-based two-sample estimation, and built several  conditional mutual information estimators using this primitive. We demonstrated their efficacy in a variety of practical settings.  Future work will aim to approximate the null distribution for CCMI, so that we can compute $P$-values for the conditional independence testing problem efficiently.

\section{Acknowledgments}

This work was supported by NSF awards 1651236 and
1703403 and NIH grant 5R01HG008164.

\bibliography{Ref}

\newpage 
\section{Supplementary}

\subsection{Hyper-parameters}
We provide the experimental settings and hyper-parameters for ease of reproducibility of the results. 

\begin{table}[h!]
\caption{Classifier : Hyper-parameters}
\label{classifier-param}
\begin{center}
\begin{tabular}{ll}
\multicolumn{1}{c}{\bf Hyper-parameter}  &\multicolumn{1}{c}{\bf Value} \\
\hline \\
Hidden Units & $ 64 $\\
\# Hidden Layers & $2$ (Inp-64-64-Out)\\
Activation & ReLU\\
Batch-Size & $64$ \\
Learning Rate & $0.001$  \\
Optimizer & Adam \\
& ($\beta_1 = 0.90, \beta_2 = 0.999$) \\
\# Epoch & $20$ \\
Regularizer & L2 ($0.001$) \\
\end{tabular}
\end{center}
\end{table}

\begin{table}[h!]
\caption{CGAN : Hyper-parameters}
\label{cgan-param}
\begin{center}
\begin{tabular}{ll}
\multicolumn{1}{c}{\bf Hyper-parameter}  &\multicolumn{1}{c}{\bf Value} \\
\hline \\
Hidden Units & $ 256 $\\
\# Hidden Layers & $2$ (Inp-256-256-Out)\\
Activation & Leaky ReLU($0.2$)\\
Batch-Size & $128$ \\
Learning Rate & $1e-4$  \\
Optimizer & Adam \\
& ($\beta_1 = 0.5, \beta_2 = 0.9$) \\
\# Epoch & $100$ \\
Noise dimension & $ 20 $\\
Noise distribution & $\mathcal{U}(-1.0, 1.0)^{d_s}$
\end{tabular}
\end{center}
\end{table}

\begin{table}[h!]
\caption{CVAE : Hyper-parameters}
\label{cvae-param}
\begin{center}
\begin{tabular}{ll}
\multicolumn{1}{c}{\bf Hyper-parameter}  &\multicolumn{1}{c}{\bf Value} \\
\hline \\
Hidden Units & $ 256 $\\
\# Hidden Layers & $2$ (Inp-256-256-Out)\\
Activation & Leaky ReLU($0.2$)\\
Batch-Size & $128$ \\
Learning Rate & $1e-4$  \\
Optimizer & Adam \\
& ($\beta_1 = 0.5, \beta_2 = 0.9$) \\
\# Epoch & $20$ \\
Dropout & $0.9$ \\
Latent dimension & $ 20 $\\
\end{tabular}
\end{center}
\end{table}

\begin{table}[h!]
\caption{f-MINE : Hyper-parameters}
\label{f-mine-param}
\begin{center}
\begin{tabular}{ll}
\multicolumn{1}{c}{\bf Hyper-parameter}  &\multicolumn{1}{c}{\bf Value} \\
\hline \\
Hidden Units & $ 64 $\\
\# Hidden Layers & $1$ (Inp-64-Out)\\
Activation & ReLU\\
Batch-Size & $128$ ($512$ for DV-MINE) \\
Learning Rate & $1e-4$  \\
Optimizer & Adam \\
& ($\beta_1 = 0.5, \beta_2 = 0.999$) \\
\# Epoch & $200$ \\
\end{tabular}
\end{center}
\end{table}

\begin{table}[h!]
\caption{AuROC : Flow-Cytometry Data (Mean $\pm$ Std. of 5 runs.}
\label{auroc-flow}
\begin{center}
\begin{tabular}{ll}
\multicolumn{1}{c}{\bf Tester}  &\multicolumn{1}{c}{\bf AuROC} \\
\hline \\
CCIT &$ 0.6665 \pm 0.006 $ \\
CCMI &$ \textbf{0.7569} \pm 0.047 $ \\
\end{tabular}
\end{center}
\end{table}

\subsection{Calibration Curve}

\begin{figure}[h!]
\includegraphics[width=0.47\textwidth]{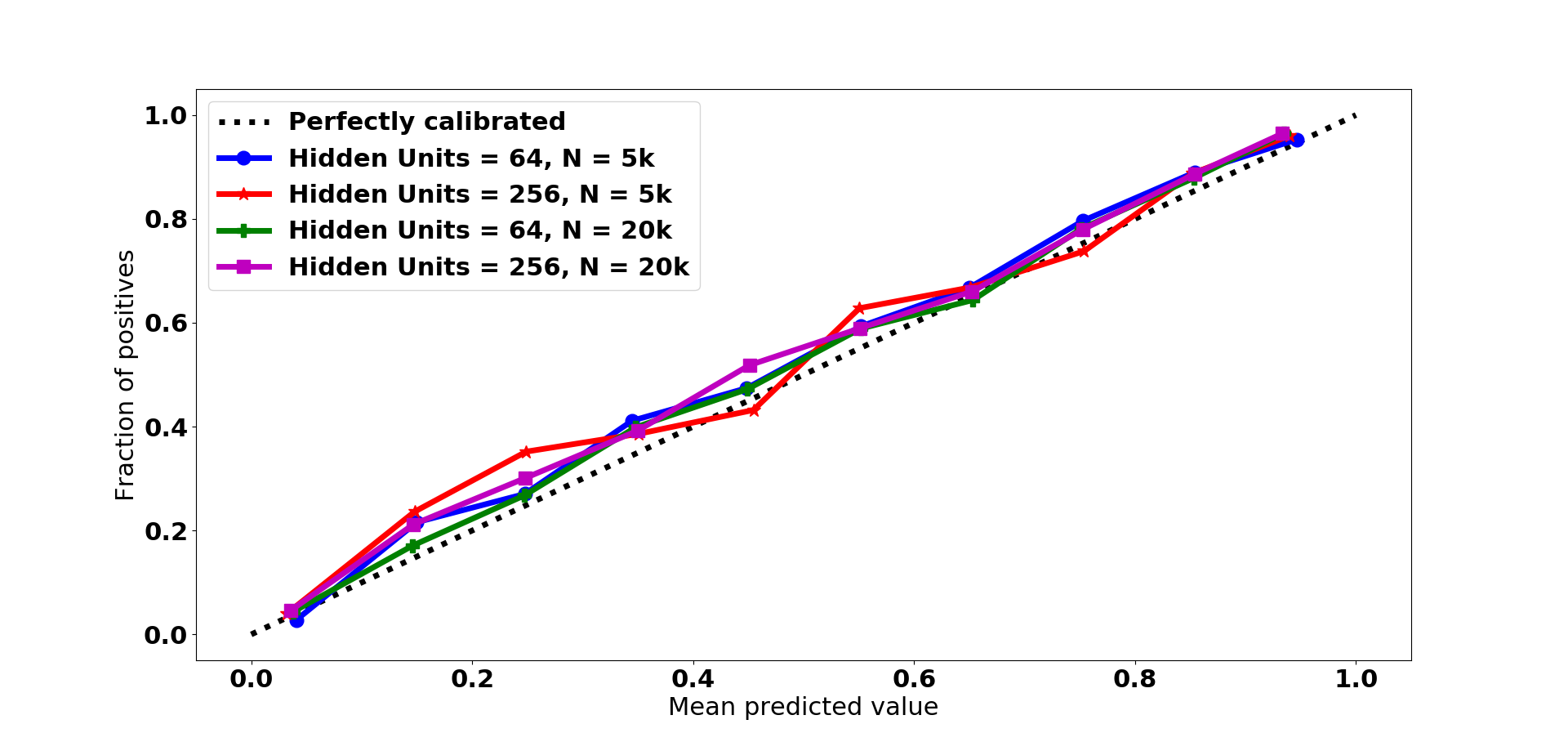} 
\caption{Calibrated Classifiers : We find that our classifiers trained with $L2$-regularization and two hidden layers are well-calibrated. The calibration is obtained for MI Estimation of Correlated Gaussians with $d_x = 10, \rho = 0.5$}
\label{calibration-nn}
\end{figure}

While \cite{niculescu} showed that neural networks for binary classification produce well-calibrated outputs. the authors in \cite{guo2017calibration} found miscalibration in deep networks with batch-normalization and no L2 regularization. In our experiments, the classifier is shallow, consisting of only $2$ layers with relatively small number of hidden units. There is no batch-normalization or dropout used. Instead, we use $L2$-regularization which was shown in \cite{guo2017calibration} to be favorable for calibration. Figure \ref{calibration-nn} shows that our classifiers are well-calibrated. 

\subsection{Choosing Optimal Hyper-parameter}

\begin{figure*}[t]
\centering
\begin{subfigure}[b]{0.47\textwidth}
\includegraphics[width=\textwidth]{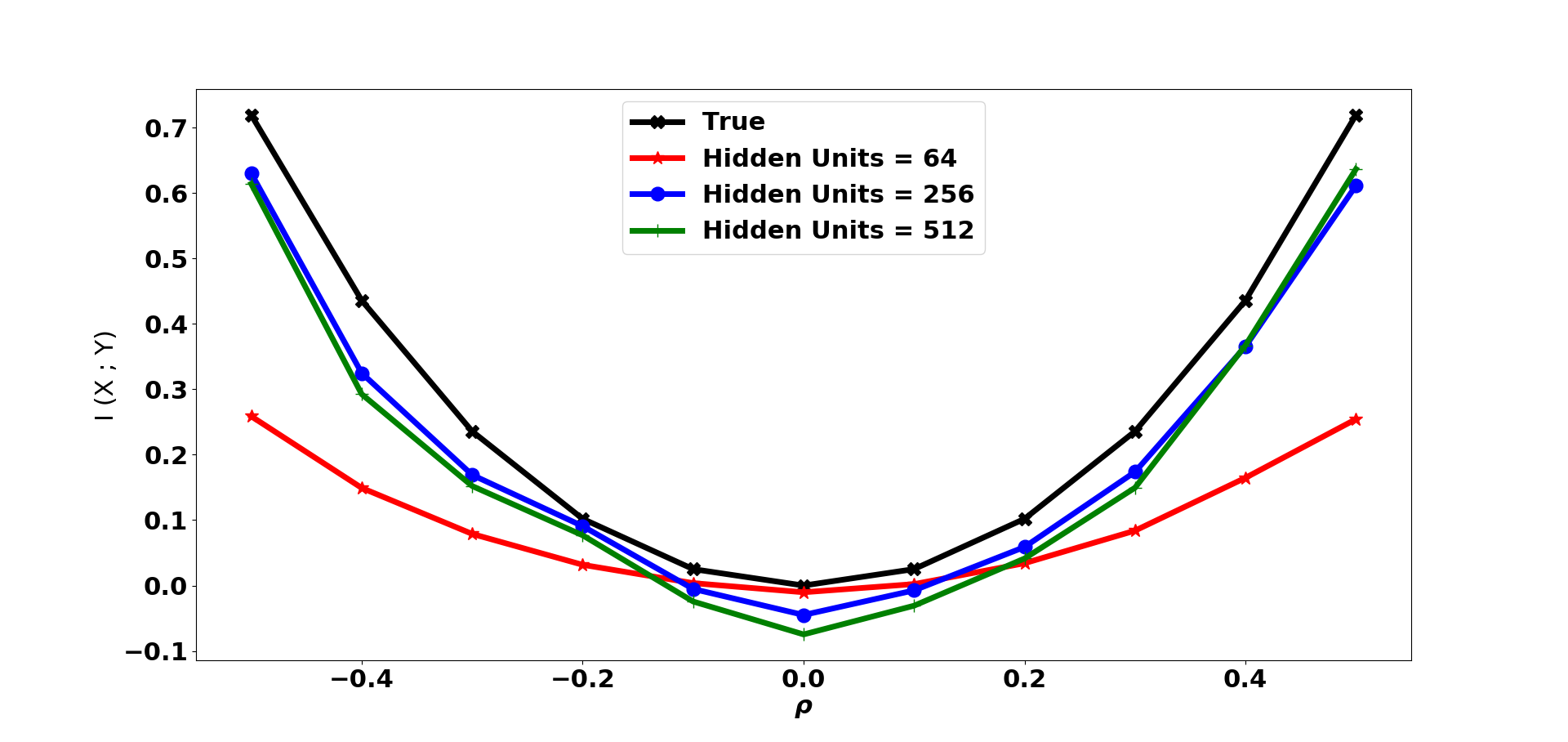} 
\caption{$d_x = d_y = 5, N = 500, $}
\end{subfigure}
\begin{subfigure}[b]{0.47\textwidth}
\includegraphics[width=\textwidth]{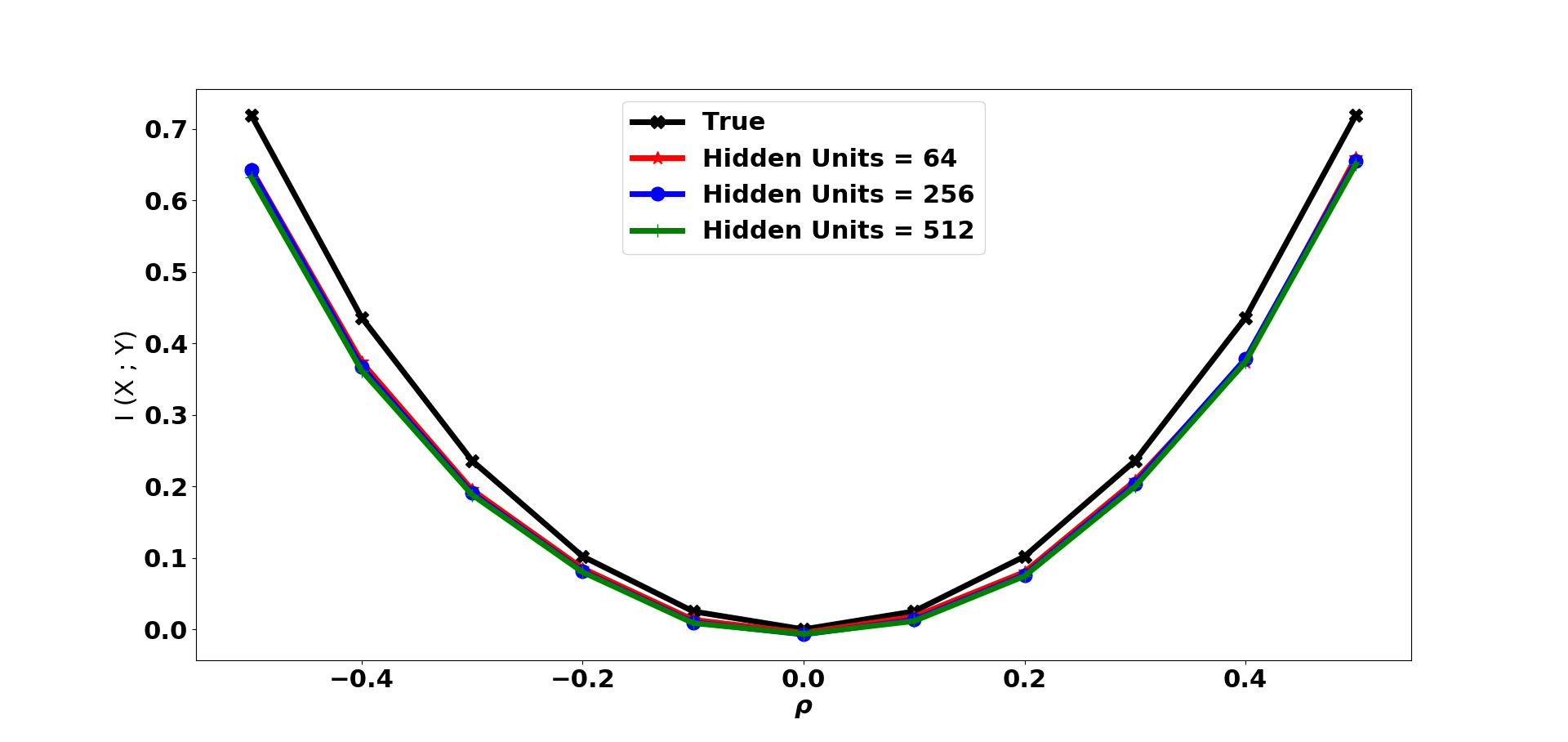} 
\caption{$d_x = d_y = 5, N = 5000$}
\end{subfigure}
\begin{subfigure}[b]{0.47\textwidth}
\includegraphics[width=\textwidth]{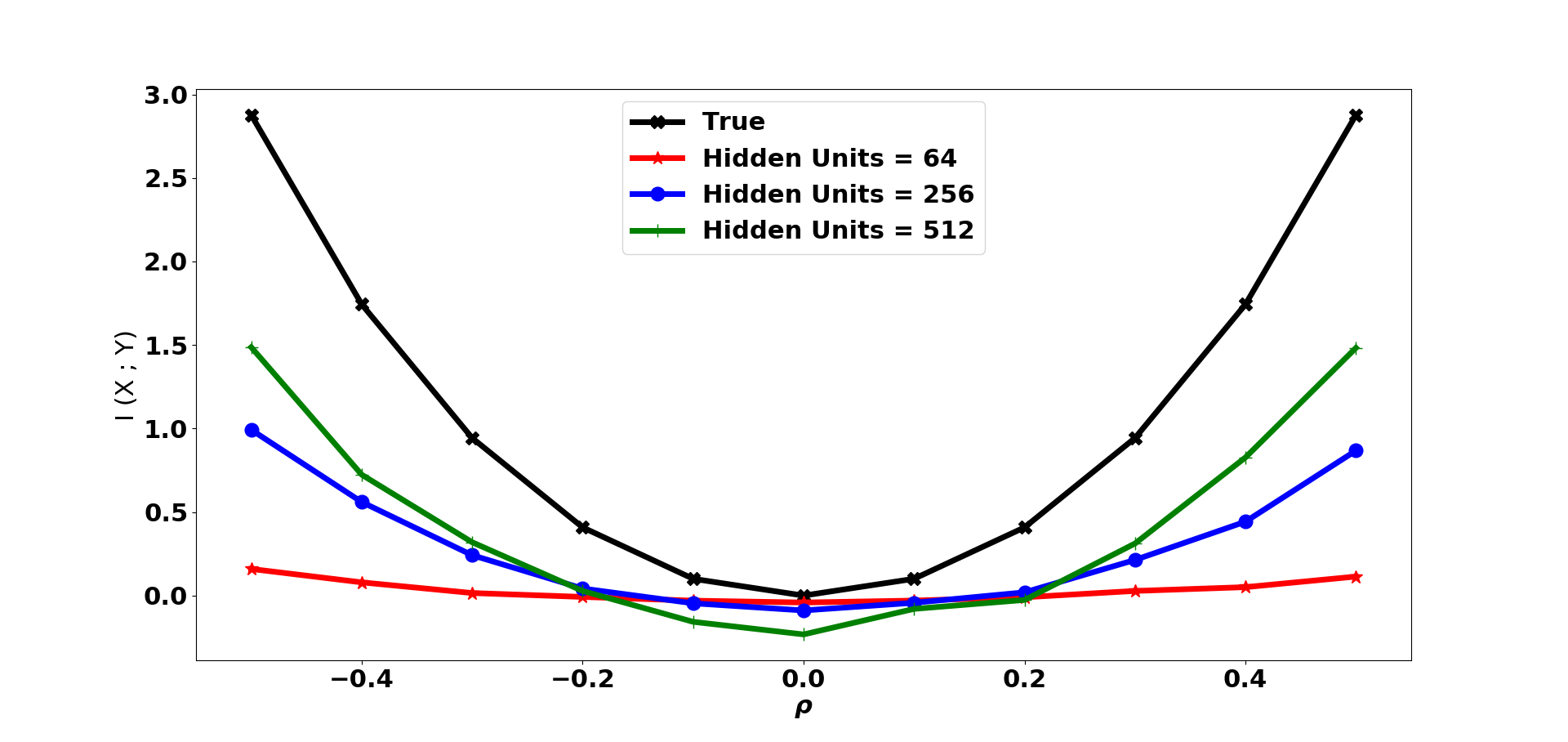} 
\caption{$d_x = d_y = 20, N = 500$}
\end{subfigure}
\begin{subfigure}[b]{0.47\textwidth}
\includegraphics[width=\textwidth]{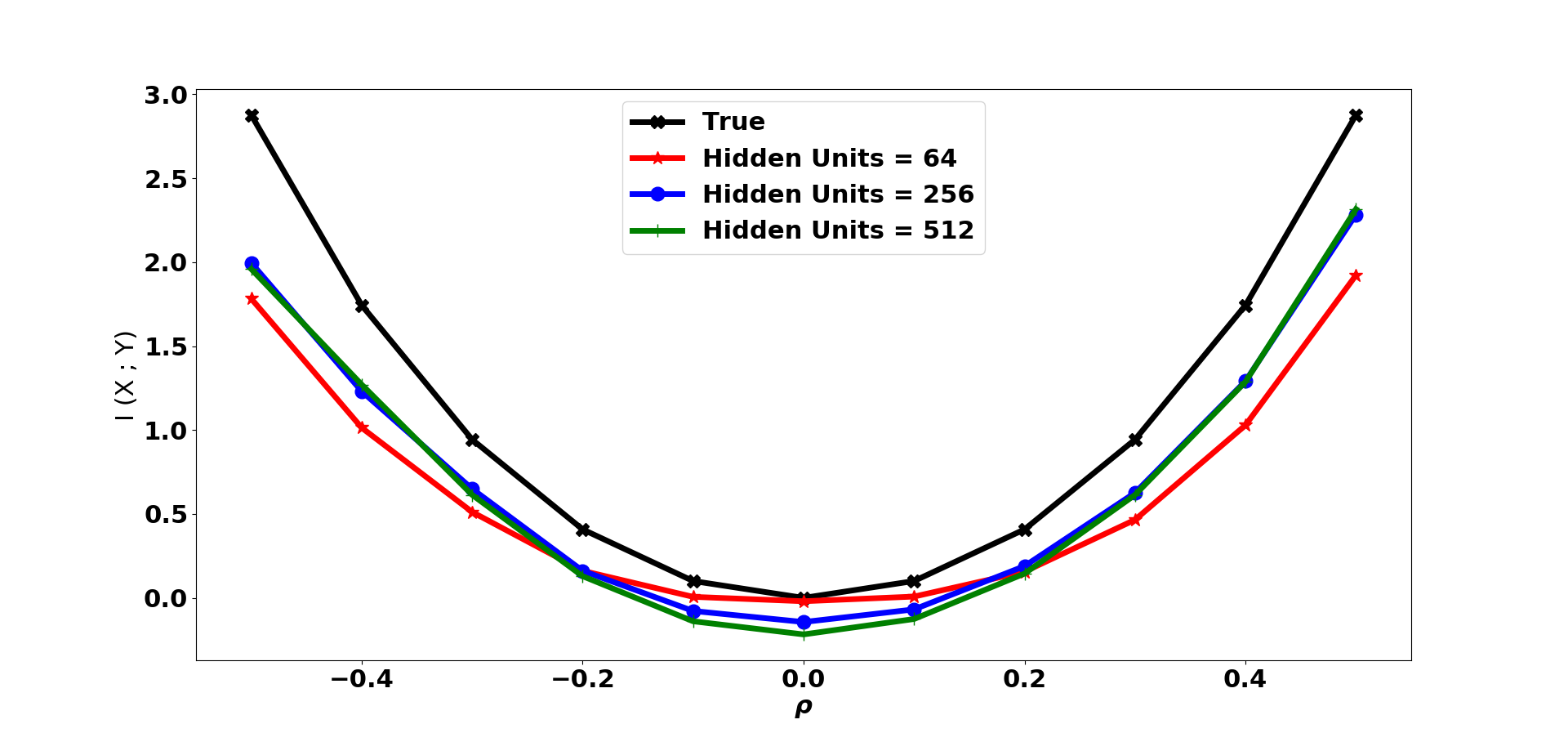} 
\caption{$d_x = d_y = 20, N = 5000$}
\end{subfigure}
\caption{The Donsker-Varadhan Representation provides a lower bound of the true MI. For each hyper-paramter choice, the estimates lie below $I^*(X;Y)$. An optimal estimator would return the maximum estimate from multiple hyper-parameter choices for a given data-set. Estimates are plotted for Correlation Gaussians introduced in Figure \ref{toy}.}
\label{hyper-param-gauss}
\end{figure*} 

The Donsker-Varadhan representation \ref{DV-bound} is a lower bound on the true MI estimate (which is the supremum over all functions). So, for any classifier parameter, the plug-in estimate value computed on the test samples will be less than or equal to the true value $I(X;Y)$ with high probability (Theorem \ref{theorem-true-lower-bound}). We illustrate this using estimation of MI for Correlated Gaussians in Figure \ref{hyper-param-gauss}. The estimated value lies below the true values of MI. Thus, the optimal hyper-parameter is the one that returns the maximum value of MI estimate on the test set. 

Once we have this block that returns the maximum MI estimate after searching over hyper-parameters, CMI estimate in CCMI is the difference of two MI estimates, calling this block twice.

We also plot the AuROC curves for the two choices of number of hidden units in flow-Cytometry data (Figure \ref{hyper-param-flow}) and post Non-linear noise synthetic data (Figure \ref{hyper-param-post}). When the number of samples is high, the estimates are pretty robust to hyper-parameter choice (Figure \ref{hyper-param-gauss} (b), \ref{hyper-param-post}). But in sparse sample regime, proper choice of hyper-parameter can improve performance (Figure \ref{hyper-param-flow}).

\begin{figure}[h!]
\includegraphics[width=0.47\textwidth]{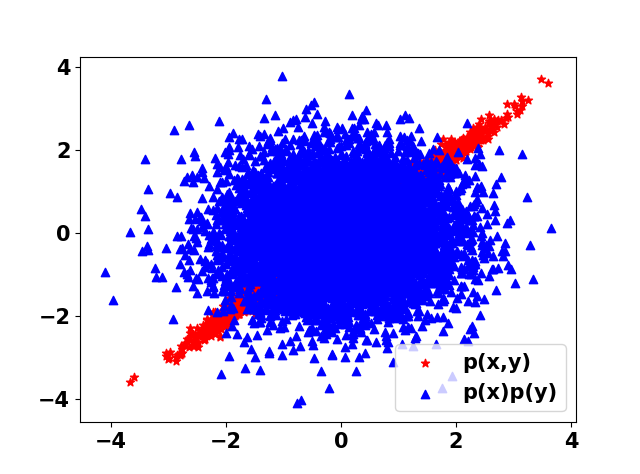} 
\caption{Logistic Regression Fails to Classify points from $p(x_1, x_2)$ (colored red) and those from $p(x_1)p(x_2)$ (colored blue).}
\label{log-reg}
\end{figure}

\begin{figure*}[t]
\centering
\begin{subfigure}[b]{0.47\textwidth}
\includegraphics[width=\textwidth]{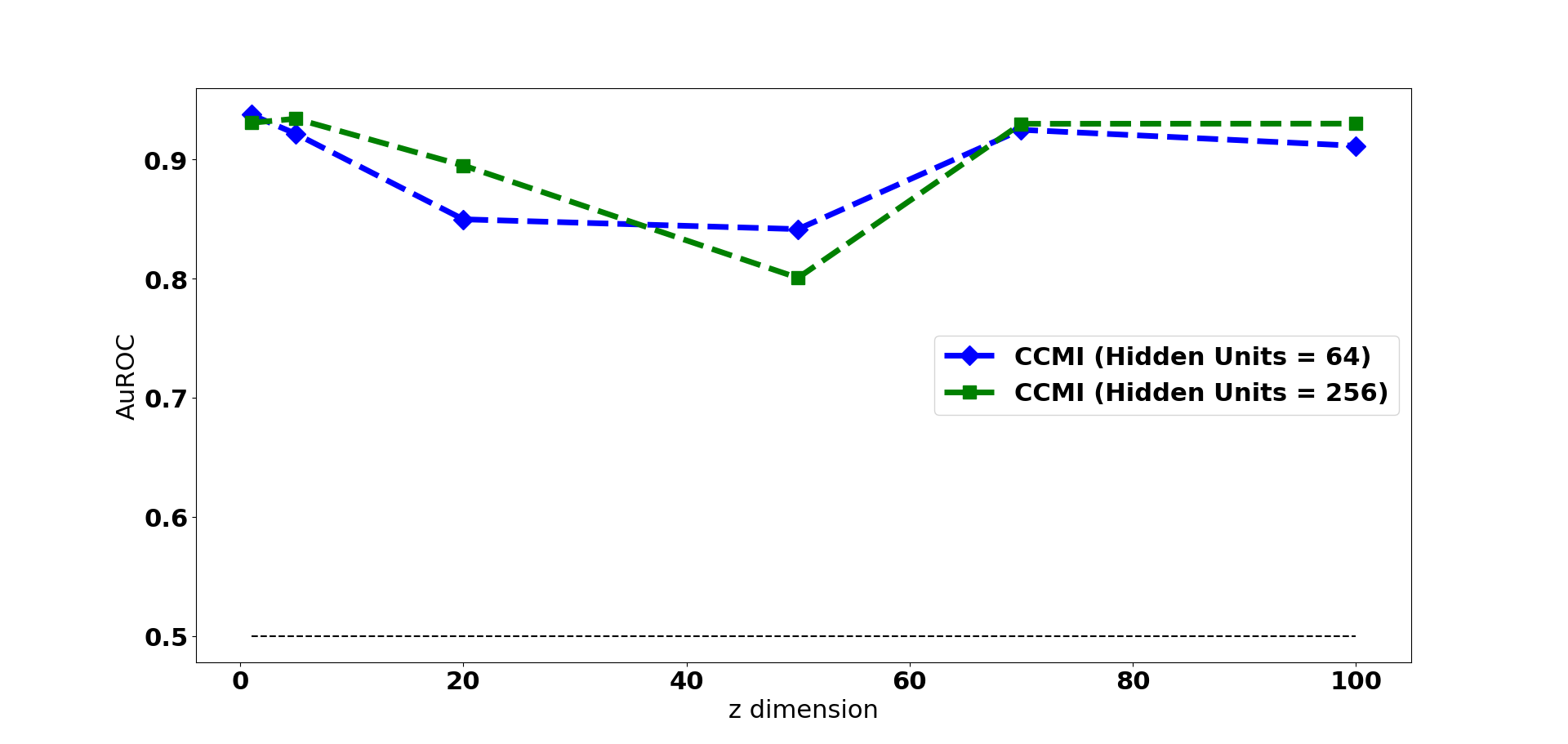} 
\caption{Post Non-linear noise data-sets}
\label{hyper-param-post}
\end{subfigure}
\begin{subfigure}[b]{0.47\textwidth}
\includegraphics[width=\textwidth]{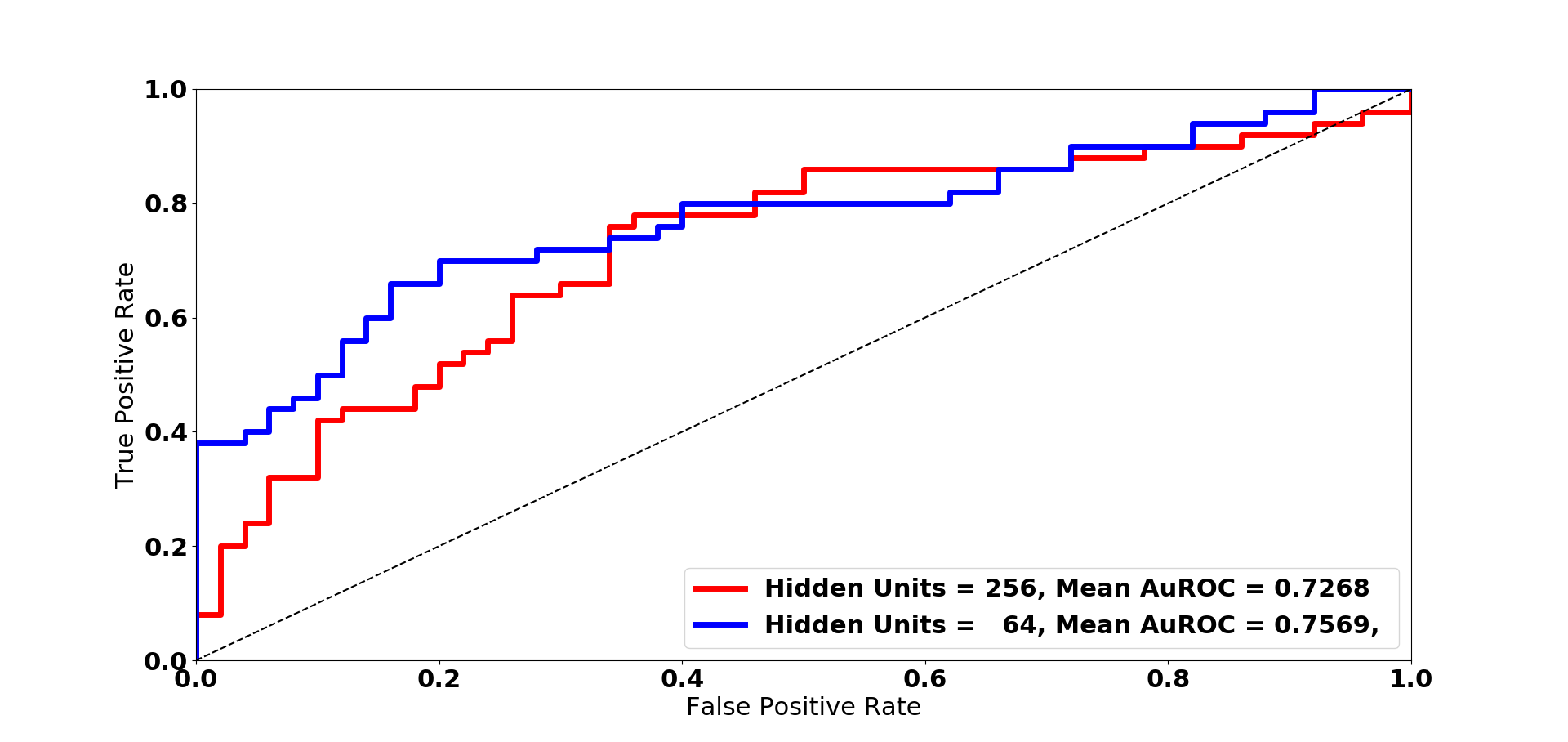} 
\caption{Flow-Cytometry data-sets}
\label{hyper-param-flow}
\end{subfigure}
\caption{Hyper-parameter Sensitivity : We observe the performance in conditional independence testing with number of hidden units as $64$ Vs $256$, keeping all other hyper-parameters the same in respective cases.}
\label{hyper-param-cit}
\end{figure*} 

\subsection{Additional Figures and Tables}

\begin{itemize}
    \item For Flow-Cytometry data-set, we used number of hidden units = $64$ for Classifier and trained for $10$ epochs. Table \ref{auroc-flow} shows the mean AuROC values for two CIT testers.
    \item Figure \ref{log-reg} shows the distribution of points from $p(x_1,x_2)$ and $p(x_1)p(x_2)$. Here the classifier would return $0.5$ as prediction for either class (leading to $\hat{D}_{KL}$ $= 0$), even though $X_1$ and $X_2$ are highly correlated ($\rho = 0.99$) and the mutual information is high.
\end{itemize}

\section{Theoretical Properties of CCMI}

In this Section, we explore some of the theoretical properties of CCMI. Let the samples $x_i \sim p(x)$ be labeled as $l = 1$ and $x_j \sim q(x)$ be labeled as $l = 0$. Let $Pr(l = 1) = Pr(l = 0) = 0.5$. The positive label probability for a given point $x$ is denoted as $\gamma(x) = Pr(l=1 | x)$. When the prediction is from a classifier with parameter $\theta$, then it is denoted as $\gamma_{\theta}(x)$. The argument $x$ of $\gamma$ is dropped when it is understood from the context.

 The following assumptions are used throughout this Section.
\begin{itemize}
\item Assumption (A1) : The underlying data distributions $p(\cdot)$ and $q(\cdot)$ admit densities in a compact subset $\mathcal{X} \subset \mathbb{R}^{d_x}$.
\item Assumption (A2) : $\exists \, \, \alpha, \beta > 0 $, such that $ \alpha \leq p(x), q(x) \leq \beta \, \, \forall \, x.$
\item Assumption (A3) : We clip predictions in algorithm such that $\gamma(x) \in  [\tau, 1-\tau ] \, \forall \, x$, with $0 < \tau \leq \alpha/(\alpha + \beta)$.
\item Assumption (A4) : The classifier class $\mathcal{C}_\theta$ is parameterized by $\theta$ in some compact domain $\Theta \subset \mathbb{R}^{h}$. $ \exists$ constant $K$,  such that $\| \theta \| \leq K$ and the output of the classifier is $L$-Lipschitz with respect to parameters $\theta$. 
\end{itemize}


\subsection*{Notation and Computation Procedure}
\begin{itemize}
\item In the case of mutual information estimation $I(U;V)$,  $x \in \mathbb{R}^{d_u + d_v} $ represents the concatenated data point $(u, v)$. To be precise, $p(x) = p(u,v)$ and $q(x) = p(u)p(v)$. \\
\item In the proofs below, we need to compute the Lipschitz constant for various functions. The general procedure for those computations are as follows. 
\[
|\phi(x) - \phi(y)| \leq L_\phi|x-y|
\]
We compute $L_\phi$ using $\sup_z |\phi'(z)|, z \in \mathrm{domain}(\phi)$. The functions encountered in the proofs are continuous, differentiable and have bounded domains.

\item The binary-cross entropy loss estimated from $n$ samples is 
\begin{multline}
\mathrm{BCE}_n (\gamma) = - \left( \frac{1}{n} \sum_i l_i \log \gamma(x_i) + \right.  \\
\biggl. (1-l_i) \log(1-\gamma(x_i)) \biggr)
\label{bce-test-n}
\end{multline}
When computed on the train samples (resp. test samples), it is denoted as $\mathrm{BCE}^{\mathrm{ERM}}_n (\gamma)$ (resp.  $\mathrm{BCE}_n (\gamma)$). 
The population mean over the joint distribution of data and labels is
\begin{multline}
\mathrm{BCE}(\gamma) = - \left( \mathbb{E}_{XL} L \log \gamma(X) + \right. \\
\left. (1-L) \log(1-\gamma(X)) \right)
\label{bce-population}
\end{multline}

\item The estimate of MI from $n$ test samples for classifier parameter $\hat{\theta}$ is given by
\[
I^{\gamma_{\hat{\theta}}}_n = \frac{1}{n}\sum\limits_{i=1}^n \log \frac{\gamma_{\hat{\theta}(x_i)}}{1-\gamma_{\hat{\theta}(x_i)}} - \log \left(  \frac{1}{n}\sum\limits_{j=1}^n \frac{\gamma_{\hat{\theta}}(x_j)}{1-\gamma_{\hat{\theta}}(x_j)} \right)
\]
The population estimate for classifier parameter $\hat{\theta}$ is given by
\[
I^{\gamma_{\hat{\theta}}} = \mathop{\mathbb{E}}\limits_{x \sim p} \log \frac{\gamma_{\hat{\theta}(x)}}{1-\gamma_{\hat{\theta}(x)}} - \log \left( \mathop{\mathbb{E}}\limits_{x \sim q}  \frac{\gamma_{\hat{\theta}}(x)}{1-\gamma_{\hat{\theta}}(x)} \right)
\]

\end{itemize}

\begin{theorem}[Theorem \ref{theorem-class-mi-consistent} restated]
Classifier-MI is consistent, i.e., given $\epsilon, \delta > 0, \exists \, n \in \mathbb{N} $, such that with probability at least $1-\delta$, we have 
\[
|I^{\gamma_{\hat{\theta}}}_n (U;V) - I(U;V) | \leq \epsilon
\]

\label{re-theorem-class-mi-consistent}
\end{theorem}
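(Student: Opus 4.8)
The plan is to follow the three-layer structure hinted at in the sketch: pin down the population minimizer $\gamma^*$ of the cross-entropy, split the total error into a statistical (concentration) piece and a deterministic (approximation) piece, and control each separately. First I would identify the Bayes-optimal classifier. Since $Pr(l=1)=Pr(l=0)=0.5$, Bayes' rule gives $\gamma^*(x)=p(x)/(p(x)+q(x))$, so the point-wise odds are $\gamma^*/(1-\gamma^*)=p(x)/q(x)$; substituting into the population estimate $I^{\gamma^*}$ recovers exactly the Donsker--Varadhan value with $f^*=\log(p/q)$, i.e. $I^{\gamma^*}=I(U;V)$ by Proposition~\ref{prop-1}. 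The triangle inequality then yields
\[
|I^{\gamma_{\hat\theta}}_n - I(U;V)| \le \underbrace{|I^{\gamma_{\hat\theta}}_n - I^{\gamma_{\hat\theta}}|}_{\text{(a) estimation}} + \underbrace{|I^{\gamma_{\hat\theta}} - I^{\gamma^*}|}_{\text{(b) approximation}},
\]
so it suffices to force each term below $\epsilon/2$ with probability at least $1-\delta$.

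For term (b) I would first argue that $\gamma\mapsto I^{\gamma}$ is Lipschitz under the clipping assumption (A3). Because every prediction lies in $[\tau,1-\tau]$, the maps $\gamma\mapsto\log\frac{\gamma}{1-\gamma}$ and $\gamma\mapsto\frac{\gamma}{1-\gamma}$ have derivatives bounded by constants depending only on $\tau$, and the outer $\log$ acts on a quantity bounded away from $0$; hence $|I^{\gamma_{\hat\theta}}-I^{\gamma^*}|\le L_I\,\mathbb{E}_{x}|\gamma_{\hat\theta}(x)-\gamma^*(x)|$. It then remains to show $\gamma_{\hat\theta}$ is $L_1$-close to $\gamma^*$, and here I would use strong convexity: the point-wise integrand of $\mathrm{BCE}(\gamma)$ has second derivative $\gamma^*/\gamma^2+(1-\gamma^*)/(1-\gamma)^2\ge(1-\tau)^{-2}$ on the clipped range, and by (A2) the marginal $p_X=\tfrac12(p+q)\ge\alpha$, so $\mathrm{BCE}(\gamma)-\mathrm{BCE}(\gamma^*)\ge\tfrac{\alpha}{2(1-\tau)^2}\|\gamma-\gamma^*\|_2^2$. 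Thus an excess-risk bound $\mathrm{BCE}(\gamma_{\hat\theta})-\mathrm{BCE}(\gamma^*)\le\epsilon'$ converts to an $L_2$- (and, on the compact domain of (A1), $L_1$-) bound on $\gamma_{\hat\theta}-\gamma^*$. The quantity $\epsilon'$ is assembled from the three standard ingredients named in the sketch: $\hat\theta$ is the empirical minimizer on the training split; a uniform generalization bound over the Lipschitz parametric class (A4) transfers this to population $\mathrm{BCE}$; and the universal approximation theorem guarantees the class contains some $\theta$ with $\mathrm{BCE}(\gamma_\theta)$ arbitrarily close to $\mathrm{BCE}(\gamma^*)$.

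For term (a), the estimation gap at the fixed (test-independent) parameter $\hat\theta$, I would condition on $\hat\theta$ and exploit the train/test split in Algorithm~\ref{alg:ccmi}: on the held-out test samples the summands $\log\frac{\gamma_{\hat\theta}}{1-\gamma_{\hat\theta}}$ and $\frac{\gamma_{\hat\theta}}{1-\gamma_{\hat\theta}}$ are bounded (again by (A3)), so Hoeffding's inequality gives exponential concentration of each empirical average around its mean, and the outer $\log$, being Lipschitz on the bounded range, lets the second summand concentrate as well. A union bound over the two pieces then produces a sample size past which term (a) is below $\epsilon/2$ with the required probability.

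The main obstacle I anticipate is coupling the three approximation layers cleanly. Each of strong convexity, generalization, and universal approximation is individually standard, but they must be chained so that the final $L_1$ radius $\eta$ feeds into the Lipschitz bound $L_I\eta\le\epsilon/2$ with constants that remain uniform and do not blow up as $\tau\to0$; tracking the dependence on $\tau$, $\alpha$, $\beta$ and the network Lipschitz constant $L$ through the chain, and ensuring that enriching the class to shrink the approximation error does not simultaneously inflate the generalization term beyond control, is where the real work lies.
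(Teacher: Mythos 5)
Your proposal is correct and follows essentially the same route as the paper's own proof: the identical decomposition into an estimation term $|I^{\gamma_{\hat\theta}}_n - I^{\gamma_{\hat\theta}}|$ and an approximation term $|I^{\gamma_{\hat\theta}} - I^{\gamma^*}|$, the same identification of $\gamma^*$ with the likelihood ratio (Lemma~\ref{Lemma-BCE}), the same chain of universal approximation (Lemma~\ref{Lemma-Approximation}), covering-number generalization (Lemma~\ref{Lemma-Generalization}), and strong convexity of $\mathrm{BCE}$ with constant $\alpha/(1-\tau)^2$ (Lemma~\ref{Lemma-conv-min}), followed by the $\tau$-dependent Lipschitz transfer to the Donsker--Varadhan functional. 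The only cosmetic differences are that you identify $\gamma^*$ via Bayes' rule rather than by minimizing the population cross-entropy, and you use Hoeffding plus a union bound for the estimation term where the paper invokes the law of large numbers (Lemma~\ref{Lemma-Estimation}); both are sound and, if anything, your concentration bound is the more quantitative choice.
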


\subsection*{Intuition of Proof}

The classifier is trained to minimize the empirical risk on the train set and obtains the minimizer as $\hat{\theta}$. From generalization bound of classifier, this loss value ($\mathrm{BCE}(\gamma_{\hat{\theta}})$) on the test set is close to the loss obtained by the best optimizer in the classifier family ($\mathrm{BCE}(\gamma_{\tilde{\theta}})$), which itself is close to the loss from global optimizer $\gamma^*$ (viz. $\mathrm{BCE}(\gamma^*)$) by Universal Function Approximation Theorem of neural-networks. 

The $\mathrm{BCE}$ loss is strongly convex in $\gamma$. $\gamma$ links $\mathrm{BCE}$ to $I(\cdot \,;\, \cdot)$, i.e., $|\mathrm{BCE}_n(\gamma_{\hat{\theta}}) - \mathrm{BCE}(\gamma^*)| \leq \epsilon' \implies \| \gamma_{\hat{\theta}} - \gamma^*\|_1 \leq \eta \implies | \hat{I}_n (U;V) - I(U;V)| \leq \epsilon$.

\begin{lemma}[Likelihood-Ratio from Cross-Entropy Loss]
The point-wise minimizer of binary cross-entropy loss $\gamma^*(x)$ is related to the likelihood ratio as $\frac{\gamma^*(x)}{1-\gamma^*(x)} = \frac{p(x)}{q(x)}$, where $\gamma^*(x) = Pr(l=1|x)$ and $l$ is the label of point $x$.
\label{Lemma-BCE}
\end{lemma}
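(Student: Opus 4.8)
The plan is to identify the pointwise minimizer of the population cross-entropy explicitly, by reducing the functional minimization to a family of independent scalar optimization problems indexed by $x$.

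First I would rewrite $\mathrm{BCE}(\gamma)$ from \eqref{bce-population} by conditioning on $X$. Using the tower property $\mathbb{E}_{XL}[\,\cdot\,] = \mathbb{E}_X \mathbb{E}_{L\mid X}[\,\cdot\,]$, the population loss becomes
\[
\mathrm{BCE}(\gamma) = - \mathbb{E}_X \left[ \eta(X) \log \gamma(X) + (1 - \eta(X)) \log(1 - \gamma(X)) \right],
\]
where $\eta(x) := Pr(L = 1 \mid X = x)$. Since the labels are balanced with $Pr(L=1) = Pr(L=0) = 1/2$, Bayes' rule applied to the densities $p$ and $q$ gives $\eta(x) = p(x)/(p(x)+q(x))$. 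Assumption (A2) guarantees $p(x), q(x) > 0$, so $\eta(x) \in (0,1)$ and every logarithm above is finite.

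Next I would observe that the integrand depends on $\gamma$ only through its value $\gamma(x)$ at the very point over which the outer expectation is taken, so the values of $\gamma$ at distinct points are decoupled. Minimizing the functional thus reduces to minimizing, for (almost) every fixed $x$, the scalar map $t \mapsto -\bigl( \eta(x) \log t + (1 - \eta(x)) \log(1 - t) \bigr)$ over $t \in (0,1)$. This map is strictly convex, since its second derivative $\eta(x)/t^2 + (1-\eta(x))/(1-t)^2$ is positive, so it has a unique minimizer, obtained by setting the first derivative to zero: $-\eta(x)/t + (1-\eta(x))/(1-t) = 0$, which yields $t = \eta(x)$. Hence $\gamma^*(x) = \eta(x) = p(x)/(p(x)+q(x))$, and substituting into the odds ratio gives
\[
\frac{\gamma^*(x)}{1 - \gamma^*(x)} = \frac{p(x)/(p(x)+q(x))}{q(x)/(p(x)+q(x))} = \frac{p(x)}{q(x)},
\]
which is the claim.

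I expect the only genuine subtlety to be justifying the reduction to pointwise minimization, i.e. that the minimizer of the expected loss coincides with the minimizer taken separately at each $x$. This is legitimate here because the lemma concerns the unconstrained (nonparametric) minimizer $\gamma^*$, so no approximation-capacity restriction couples the values of $\gamma$ across different points; the strict convexity of the scalar objective then pins down a unique interior stationary point, and Assumptions (A1)--(A2) ensure $\eta(x)$ stays strictly inside $(0,1)$ so that the objective is well defined and the critical point is attained in the interior.
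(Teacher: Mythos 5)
Your proof is correct and takes essentially the same route as the paper's: both rewrite the population BCE as a pointwise-decoupled objective (the paper by splitting the expectation over labels to obtain $-\tfrac{1}{2}\sum_x \left[ p(x)\log\gamma(x) + q(x)\log(1-\gamma(x)) \right]$, you by conditioning on $X$ and applying Bayes' rule with balanced priors) and then minimize a strictly convex scalar function at each $x$. The only difference is that you make explicit the decoupling and strict-convexity justification that the paper compresses into the phrase ``point-wise minimizer,'' which is a welcome but not substantively different addition.
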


\begin{proof}

The binary cross entropy loss as a function of gamma is defined in (\ref{bce-population}).
Now, 
\begin{align*}
\mathbb{E}_{XL} L \log \gamma(X) &= \sum_{x,l} p(x,l) l \log \gamma(x) \\
&=  \sum_{x,l=1} p(x|l=1)p(l=1) \log \gamma(x) + 0 \\
&= \frac{1}{2}\sum_x p(x) \log \gamma(x)
\end{align*}

Similarly, 
\[\mathbb{E}_{XL} (1-L) \log(1-\gamma(X)) = \frac{1}{2}\sum_x q(x) \log (1-\gamma(x))
\] 
Using these in the expression for $\mathrm{BCE}(\gamma)$, we obtain 
\[
\mathrm{BCE}(\gamma) = - \frac{1}{2} \left( \sum\limits_{x \in \mathcal{X}} p(x) \log \gamma(x) + q(x) \log(1-\gamma(x)) \right)
\]
The point-wise minimizer $\gamma^*$ of $\mathrm{BCE}(\gamma)$ gives $\frac{\gamma^*(x)}{1-\gamma^*(x)} = \frac{p(x)}{q(x)}$.
\end{proof}

\begin{lemma}[Function Approximation]
Given $\epsilon' > 0$, $\ \exists \, \tilde{\theta} \in \Theta$ such that
\[
\mathrm{BCE}(\gamma_{\tilde{\theta}}) \leq  \mathrm{BCE}(\gamma^*) + \frac{\epsilon'}{2}
\]
\label{Lemma-Approximation}
\end{lemma}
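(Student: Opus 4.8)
The plan is to combine the closed-form characterization of the pointwise minimizer $\gamma^*$ from Lemma~\ref{Lemma-BCE} with the Universal Function Approximation Theorem for neural networks, and then to transfer the resulting uniform closeness of functions into closeness of the $\mathrm{BCE}$ values via Lipschitz continuity of the logarithm away from its singularities. Concretely, I would first use Lemma~\ref{Lemma-BCE} to write $\gamma^*(x) = p(x)/\big(p(x)+q(x)\big)$. By Assumption (A2), $\alpha \le p(x), q(x) \le \beta$ everywhere, so $\gamma^*(x) \in \big[\alpha/(\alpha+\beta),\, \beta/(\alpha+\beta)\big]$; in particular $\gamma^*$ is bounded strictly inside $(0,1)$ and, by the choice $\tau \le \alpha/(\alpha+\beta)$ in (A3), lies in $[\tau, 1-\tau]$. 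I would also record that $\gamma^*$ is continuous on the compact domain $\mathcal{X}$ of (A1), so that the hypotheses of the approximation theorem are met.

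Second, I would invoke the Universal Function Approximation Theorem: for any $\delta > 0$ there exists $\tilde\theta \in \Theta$ with $\sup_{x \in \mathcal{X}} |\gamma_{\tilde\theta}(x) - \gamma^*(x)| \le \delta$. Since $\gamma^*$ already lies in $[\tau, 1-\tau]$, applying the clipping of (A3) to $\gamma_{\tilde\theta}$ cannot increase this sup-distance, so I may assume $\gamma_{\tilde\theta}(x) \in [\tau, 1-\tau]$ as well. This is the key enabling step, as it guarantees that both $\log \gamma_{\tilde\theta}$ and $\log(1-\gamma_{\tilde\theta})$ remain finite and uniformly controlled.

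Third, I would bound the functional gap. Writing $\mathrm{BCE}(\gamma) = -\tfrac12 \int_{\mathcal{X}} \big(p(x)\log\gamma(x) + q(x)\log(1-\gamma(x))\big)\,dx$ and using that $t \mapsto \log t$ and $t \mapsto \log(1-t)$ are both $(1/\tau)$-Lipschitz on $[\tau, 1-\tau]$, I obtain $|\mathrm{BCE}(\gamma_{\tilde\theta}) - \mathrm{BCE}(\gamma^*)| \le \tfrac{1}{2\tau}\int_{\mathcal{X}} \big(p(x)+q(x)\big)\,|\gamma_{\tilde\theta}(x)-\gamma^*(x)|\,dx \le \tfrac{\delta}{\tau}$, where the last inequality uses $\|\gamma_{\tilde\theta}-\gamma^*\|_\infty \le \delta$ together with $\int p = \int q = 1$. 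Choosing $\delta = \tau\epsilon'/2$ then yields $\mathrm{BCE}(\gamma_{\tilde\theta}) \le \mathrm{BCE}(\gamma^*) + \epsilon'/2$, and since $\gamma^*$ is the pointwise (hence global) minimizer, the gap is automatically nonnegative, matching the one-sided claim.

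The main obstacle is the boundedness bookkeeping in the second step: the Lipschitz estimate is legitimate only if the \emph{approximating} output $\gamma_{\tilde\theta}$, and not merely $\gamma^*$, is bounded away from $0$ and $1$ so that the logarithms stay uniformly Lipschitz. This is exactly what (A3)'s clipping with $\tau \le \alpha/(\alpha+\beta)$ secures, since $\gamma^*$ lies inside the clipping range and is therefore undisturbed, while $\gamma_{\tilde\theta}$ is forced into the same safe band without losing the $\delta$-approximation. A secondary technical point meriting justification is the continuity of $\gamma^*$ required to apply the Universal Approximation Theorem, which I would derive from a continuous representative of the densities on the compact set $\mathcal{X}$.
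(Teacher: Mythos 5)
Your proposal is correct and follows essentially the same route as the paper's proof: invoke the Universal Function Approximation Theorem to obtain a uniform $\delta$-approximation $\gamma_{\tilde\theta}$ of $\gamma^*$, then convert uniform closeness of the functions into closeness of the $\mathrm{BCE}$ values via the $(1/\tau)$-Lipschitz property of $\log$ on $[\tau,1-\tau]$, choosing $\delta = \tau\epsilon'/2$. The additional bookkeeping you supply --- that $\gamma^*\in[\tau,1-\tau]$ under (A2)--(A3) so the clipping is harmless, the continuity of $\gamma^*$ needed for the approximation theorem, and the explicit integral bound using $\int p = \int q = 1$ --- only makes explicit details that the paper leaves implicit.
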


\begin{proof}
The last layer of the neural network being sigmoid (followed by clipping to $[\tau, 1-\tau ]$) ensures that the outputs are bounded. So by the Universal Function Approximation Theorem for multi-layer feed-forward neural networks \citep{hornik}, $\exists$ parameter $\tilde{\theta}$ such that $| \gamma^* - \gamma_{\tilde{\theta}}| \leq \epsilon'' \, \forall \, x $, where $\gamma_{\tilde{\theta}}$ is the estimated classifier prediction function with parameter $\tilde{\theta}$. So,
\[
|\mathrm{BCE}(\gamma_{\tilde{\theta}}) - \mathrm{BCE}(\gamma^*)| \leq \frac{1}{\tau} \epsilon''
\]
since $\log$ is Lipshitz continuous with constant $\frac{1}{\tau}$. Choose $\epsilon'' = \frac{\epsilon'\tau}{2}$ to complete the proof.

\end{proof}

\begin{lemma}[Generalization] 

Given $\epsilon', \delta >0$, $\forall \, n \geq \frac{18 M^2}{\epsilon'^2}(h \log(96 KL \sqrt{d}/\epsilon') + \log(2/\delta))$, such that with probability at least $1-\delta$, we have 
\[
\mathrm{BCE}_n(\gamma_{\hat{\theta}}) \leq  \mathrm{BCE}(\gamma_{\tilde{\theta}}) + \frac{\epsilon'}{2}
\]
\label{Lemma-Generalization}
\end{lemma}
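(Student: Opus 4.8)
The plan is to prove a uniform-convergence bound over the parameterized classifier family and then combine it with the empirical-risk-minimizing property of $\hat{\theta}$. Recall that $\hat{\theta}$ minimizes the training loss $\mathrm{BCE}^{\mathrm{ERM}}_n(\gamma_\theta)$ and that $\tilde{\theta}$ is the good approximator supplied by Lemma~\ref{Lemma-Approximation}. Setting $\Delta := \sup_{\theta \in \Theta} |\mathrm{BCE}_n(\gamma_\theta) - \mathrm{BCE}(\gamma_\theta)|$, together with the analogous deviation on the train split, I would chain
\[
\mathrm{BCE}_n(\gamma_{\hat{\theta}}) \le \mathrm{BCE}(\gamma_{\hat{\theta}}) + \Delta \le \mathrm{BCE}^{\mathrm{ERM}}_n(\gamma_{\hat{\theta}}) + 2\Delta \le \mathrm{BCE}^{\mathrm{ERM}}_n(\gamma_{\tilde{\theta}}) + 2\Delta \le \mathrm{BCE}(\gamma_{\tilde{\theta}}) + 3\Delta,
\]
where the third inequality uses the optimality $\mathrm{BCE}^{\mathrm{ERM}}_n(\gamma_{\hat{\theta}}) \le \mathrm{BCE}^{\mathrm{ERM}}_n(\gamma_{\tilde{\theta}})$. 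Hence it suffices to show $\Delta \le \epsilon'/6$ with probability at least $1-\delta$ once $n$ exceeds the stated threshold; the heart of the lemma is this high-probability control of the uniform deviation $\Delta$.

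To bound $\Delta$ I would run a covering-number argument resting on two facts. First, under (A2)--(A3) the per-sample loss $-l\log\gamma_\theta(x)-(1-l)\log(1-\gamma_\theta(x))$ takes values in a bounded interval of width $M$, since $\gamma_\theta(x)\in[\tau,1-\tau]$ forces each log term to be at most $\log(1/\tau)$ in magnitude; Hoeffding's inequality then gives, at any fixed $\theta$, $\Pr(|\mathrm{BCE}_n(\gamma_\theta)-\mathrm{BCE}(\gamma_\theta)|>t)\le 2\exp(-2nt^2/M^2)$. Second, $\theta\mapsto\mathrm{BCE}_n(\gamma_\theta)$ is Lipschitz in $\theta$: the map $\gamma\mapsto -l\log\gamma-(1-l)\log(1-\gamma)$ is $(1/\tau)$-Lipschitz on $[\tau,1-\tau]$, and by (A4) $\gamma_\theta(x)$ is $L$-Lipschitz in $\theta$, so the composite empirical (and population) loss is Lipschitz in $\theta$ with constant proportional to $L$.

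Using the part of (A4) that places $\Theta$ in a ball of radius $K$ in $\mathbb{R}^{h}$, I would take a $\rho$-net of $\Theta$ of cardinality $N(\rho)\le (CK\sqrt{d}/\rho)^{h}$ (the $\sqrt{d}$ arising from the norm conversion in the volumetric covering bound), apply the Hoeffding bound at each of the $N(\rho)$ net points, and take a union bound over them. The Lipschitz property transfers control from the net points to all of $\Theta$, yielding $\Delta \le t + (\text{const})\,L\rho$. Choosing $\rho \asymp \epsilon'/L$ so the discretization error is absorbed and setting $t=\epsilon'/6$, the requirement $N(\rho)\cdot 2\exp(-2nt^2/M^2)\le\delta$ rearranges to $n \ge \frac{M^2}{2t^2}\big(\log N(\rho)+\log(2/\delta)\big)$, which after substituting $t=\epsilon'/6$ and $\log N(\rho)=h\log(96KL\sqrt{d}/\epsilon')$ reproduces exactly the stated sample size $n\ge \frac{18M^2}{\epsilon'^2}\big(h\log(96KL\sqrt{d}/\epsilon')+\log(2/\delta)\big)$.

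I expect the main obstacle to be the bookkeeping that ties the chosen radius $\rho$ and tolerance $t$ to the precise constants $18$ and $96$, since these depend on how the $3\Delta$ budget is split between the Hoeffding deviation and the Lipschitz discretization and on the exact volumetric covering bound for the parameter ball. A secondary subtlety is that $\hat{\theta}$ is data-dependent, so pointwise concentration is genuinely insufficient and the uniform (covering) step is essential; one must also ensure that the train and test deviations entering the chain are each controlled by the same event, which is precisely why the deviation budget is set to $\epsilon'/6$ rather than $\epsilon'/2$.
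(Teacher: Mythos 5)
Your proposal is correct and follows essentially the same route as the paper's proof: the identical four-step chain $\mathrm{BCE}_n(\gamma_{\hat{\theta}}) \le \mathrm{BCE}(\gamma_{\hat{\theta}}) + \mu \le \mathrm{BCE}^{\mathrm{ERM}}_n(\gamma_{\hat{\theta}}) + 2\mu \le \mathrm{BCE}^{\mathrm{ERM}}_n(\gamma_{\tilde{\theta}}) + 2\mu \le \mathrm{BCE}(\gamma_{\tilde{\theta}}) + 3\mu$ exploiting ERM optimality, Hoeffding's inequality with range $M$ at fixed $\theta$, a union bound over a covering of the parameter ball with radius scaling as $\epsilon'/L$ (the paper takes $r = \mu/(8L)$, citing Mohri et al.), and the choice $\mu = \epsilon'/6$ yielding the constants $18$ and $96$. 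Your explicit remarks about the data-dependence of $\hat{\theta}$ forcing uniform convergence and the Lipschitz transfer from net points to all of $\Theta$ are exactly the (partly implicit) content of the paper's covering-number step.
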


\begin{proof}
Let $\hat{\theta} \leftarrow \arg\min\limits_{\theta} \mathrm{BCE}^{\mathrm{ERM}}_n (\gamma_{\theta})$. 

From Hoeffding's inequality, 
\[
Pr \left(| \mathrm{BCE}^{\mathrm{ERM}}_n(\gamma_{\theta}) - \mathrm{BCE}(\gamma_{\theta})| \geq \mu \right) \leq 2\exp \left( \frac{-2n\mu^2}{M^2} \right)
\]
where $M = \log \left( \frac{1-\tau}{\tau} \right)$.

Similarly, for the test samples, 
\begin{equation}
Pr \left(| \mathrm{BCE}_n(\gamma_{\theta}) - \mathrm{BCE}(\gamma_{\theta})| \geq \mu \right) \leq 2\exp \left( \frac{-2n\mu^2}{M^2} \right)
\label{eq-hoeff-test}
\end{equation}

We want this to hold for all parameters $\theta \in \Theta$. This is obtained using the covering number of the compact domain $\Theta \subset \mathbb{R}^h$. We use small balls $B_r(\theta_j)$ of radius $r$ centered at $\theta_j$ so that $ \Theta \subset \cup_j B_r(\theta_j)$ The covering number $\kappa(\Theta, r)$ is finite as $\Theta$ is compact and is bounded as 
\[
\kappa(\Theta, r) \leq \left( \frac{2 K \sqrt{h}}{r} \right)^h
\]
Using the union bound on these finite hypotheses, 
\begin{multline}
Pr \left( \max\limits_{\theta} | \mathrm{BCE}^{\mathrm{ERM}}_n(\gamma_{\theta}) - \mathrm{BCE}(\gamma_{\theta})| \geq \mu \right) \\ \leq 2 \kappa(\Theta, r) \exp \left( \frac{-2n\mu^2}{M^2} \right)
\label{eq-union-bound}
\end{multline}

Choose $r = \frac{\mu}{8L}$ \citep{mohri2018foundations}. Solving for number of samples $n$ with $ 2 \kappa(\Theta, r) \exp \left( \frac{-2n\mu^2}{M^2} \right) \leq \delta$, we obtain $n \geq  \frac{M^2}{2\mu^2}(h \log(16 KL \sqrt{d}/\mu) + \log(2/\delta)  )$.

So for $n \geq \frac{M^2}{2\mu^2}(h \log(16 KL \sqrt{d}/\mu) + \log(2/\delta))$, with probability at least $1-\delta$, 
\begin{align*}
\mathrm{BCE}_n(\gamma_{\hat{\theta}}) & \overset{(a)}{\leq} \mathrm{BCE}(\gamma_{\hat{\theta}}) + \mu \overset{(b)}{\leq} \mathrm{BCE}^{\mathrm{ERM}}_n(\gamma_{\hat{\theta}}) + 2\mu \\
&\overset{(c)}{\leq} \mathrm{BCE}^{\mathrm{ERM}}_n(\gamma_{\tilde{\theta}}) + 2\mu \overset{(d)}{\leq} \mathrm{BCE}(\gamma_{\tilde{\theta}}) + 3\mu
\end{align*}

$ (a) $ follows from (\ref{eq-hoeff-test}). $(b)$ and $(d)$ follow from (\ref{eq-union-bound}). $(c)$ is due to the fact that $\hat{\theta}$ is the minimizer of train loss. Choosing $\mu = \epsilon'/6$ completes the proof.
\end{proof}

\begin{lemma}[Convergence to minimizer]

\label{Lemma-conv-min} Given $\epsilon' > 0$, $\exists \, \eta \left (=(1-\tau)\sqrt{\frac{2 \lambda(\mathcal{X}) \epsilon'}{\alpha}} \right) > 0$ such that whenever $ \mathrm{BCE}(\gamma_{\theta}) - \mathrm{BCE}(\gamma^*) \leq   \epsilon'$, we have 
\[
\| \vec{\gamma_{\theta}} - \vec{\gamma^*} \|_1 \leq \eta
\] 
where $\vec{\gamma} = [ \gamma(x) ]_{x \in \mathcal{X}} $ and $ \lambda(\mathcal{X})$ is the Lebesgue measure of compact set $\mathcal{X} \subset \mathbb{R}^{d_x}$.
\end{lemma}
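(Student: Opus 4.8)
The plan is to exploit the strong convexity of the binary cross-entropy functional in $\gamma$ and then convert the resulting $L^2$ bound into the desired $L^1$ bound via Cauchy--Schwarz. From Lemma \ref{Lemma-BCE}, writing the population loss as an integral over the compact domain $\mathcal{X}$,
\[
\mathrm{BCE}(\gamma) = -\tfrac{1}{2}\int_{\mathcal{X}} \big( p(x)\log\gamma(x) + q(x)\log(1-\gamma(x)) \big)\, dx = \int_{\mathcal{X}} g_x(\gamma(x))\, dx,
\]
where $g_x(u) = -\tfrac{1}{2}\big(p(x)\log u + q(x)\log(1-u)\big)$, and the pointwise minimizer $\gamma^*(x) = p(x)/(p(x)+q(x))$ satisfies $g_x'(\gamma^*(x)) = 0$.

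First I would establish uniform strong convexity of each $g_x$ on the clipped range $[\tau,1-\tau]$. Differentiating twice gives $g_x''(u) = \tfrac{1}{2}\big(p(x)/u^2 + q(x)/(1-u)^2\big)$. Using Assumption (A2), $p(x), q(x) \geq \alpha$, together with Assumption (A3), which forces $\gamma(x)\in[\tau,1-\tau]$ and hence both $u\leq 1-\tau$ and $1-u\leq 1-\tau$, I would bound \emph{both} terms below to obtain $g_x''(u) \geq \tfrac{1}{2}\big(\alpha/(1-\tau)^2 + \alpha/(1-\tau)^2\big) = \alpha/(1-\tau)^2 =: m$, uniformly in $x$. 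Keeping both terms, rather than discarding the $q$-term, is what yields the precise modulus $m$ appearing in $\eta$.

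Next, since $g_x'(\gamma^*(x))=0$, the standard quadratic lower bound for an $m$-strongly convex function gives $g_x(\gamma_\theta(x)) - g_x(\gamma^*(x)) \geq \tfrac{m}{2}(\gamma_\theta(x)-\gamma^*(x))^2$ for every $x$. Integrating over $\mathcal{X}$ and invoking the hypothesis $\mathrm{BCE}(\gamma_\theta) - \mathrm{BCE}(\gamma^*) \leq \epsilon'$ yields
\[
\tfrac{m}{2}\,\|\gamma_\theta - \gamma^*\|_2^2 \leq \mathrm{BCE}(\gamma_\theta) - \mathrm{BCE}(\gamma^*) \leq \epsilon',
\]
so that $\|\gamma_\theta - \gamma^*\|_2 \leq (1-\tau)\sqrt{2\epsilon'/\alpha}$. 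Finally, Cauchy--Schwarz over the compact domain, $\|f\|_1 \leq \sqrt{\lambda(\mathcal{X})}\,\|f\|_2$, delivers exactly $\|\gamma_\theta - \gamma^*\|_1 \leq (1-\tau)\sqrt{2\lambda(\mathcal{X})\epsilon'/\alpha} = \eta$.

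The main obstacle is making the strong-convexity modulus uniform across the whole domain and matching the stated constant: this is precisely where Assumptions (A2) and (A3) enter, since without the two-sided bound $p,q\geq\alpha$ and the clip $\gamma\in[\tau,1-\tau]$ the second derivative could degenerate and the quadratic-growth constant would not be bounded away from zero. A secondary subtlety is notational --- the loss in Lemma \ref{Lemma-BCE} is displayed as a sum but must be read as an integral over the continuous set $\mathcal{X}$ for the $L^2$-to-$L^1$ passage through $\lambda(\mathcal{X})$ to be meaningful.
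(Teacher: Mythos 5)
Your proposal is correct and follows essentially the same route as the paper: strong convexity of $\mathrm{BCE}$ in $\gamma$ with modulus $\alpha/(1-\tau)^2$ (obtained by lower-bounding both the $p$- and $q$-terms of the second derivative under (A2) and (A3)), the quadratic growth bound at the minimizer $\gamma^*$ where the gradient vanishes, and then H\"older/Cauchy--Schwarz over the compact set $\mathcal{X}$ to convert the $L^2$ bound into the $L^1$ bound with the factor $\sqrt{\lambda(\mathcal{X})}$. The only cosmetic difference is that you argue pointwise via $g_x$ and integrate, whereas the paper phrases the same inequality as a Taylor expansion of the functional in the vector $\vec{\gamma}$; the constants and conclusion are identical.
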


\begin{proof}

\[
\mathrm{BCE}(\gamma) = - \frac{1}{2} \left( \sum\limits_{x \in \mathcal{X}} p(x) \log \gamma(x) + q(x) \log(1-\gamma(x)) \right)
\]
is $\alpha'$-strongly convex as a function of $\vec{\gamma}$ under Assumption (A2), where $\alpha' = \frac{\alpha}{(1-\tau)^2}$. So $ \forall \gamma \,,  \frac{\partial^2\mathrm{BCE}}{\partial \gamma(x_k) \partial \gamma(x_l)} \geq \alpha'$ for $k = l$ and $0$ otherwise. 
Using the Taylor expansion for strongly convex functions, we have
\begin{multline*}
\mathrm{BCE}(\vec{\gamma_{\theta}}) \geq \mathrm{BCE}(\vec{\gamma^*}) + \langle {\nabla\mathrm{BCE}(\vec{\gamma^*}), \vec{\gamma_{\theta}} - \vec{\gamma^*}} \rangle \\+ \frac{\alpha'}{2} \| \vec{\gamma_{\theta}} - \vec{\gamma^*} \|_2^2
\end{multline*}
Since $\vec{\gamma^*}$ is the minimizer, $ \nabla\mathrm{BCE}(\vec{\gamma^*}) = 0$. So,
\begin{align*}
&\| \vec{\gamma^*} - \vec{\gamma_{\theta}} \|_2 \\
&\leq (1-\tau) \sqrt{\frac{2}{\alpha}\left( \mathrm{BCE}(\vec{\gamma_{\theta}}) - \mathrm{BCE}(\vec{\gamma^*}) \right)} \\
&\implies \| \vec{\gamma^*} - \vec{\gamma_{\theta}} \|_2 \leq (1-\tau)\sqrt{\frac{2}{\alpha}\epsilon'}
\end{align*}
From Holder's inequality in finite measure space,
\begin{align*}
\| \vec{\gamma^*} - \vec{\gamma_{\theta}} \|_1 &\leq \sqrt{\lambda(\mathcal{X})} \| \vec{\gamma^*} - \vec{\gamma_{\theta}} \|_2  \\
&\leq (1-\tau)\sqrt{\frac{2}{\alpha}\lambda(\mathcal{X})\epsilon'} = \eta
\end{align*}

\end{proof}

\begin{lemma}[Estimation from Samples]
Given $\epsilon > 0$, for any classifier with parameter $\theta \in \Theta$, $\exists \, n \in \mathbb{N} $ such that with probability $1$,
\[
|I^{\gamma_{\theta}}_n (U;V) - I^{\gamma_{\theta}} (U;V)| \leq \frac{\epsilon}{2}
\] 
\label{Lemma-Estimation}
\end{lemma}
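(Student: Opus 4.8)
The plan is to exploit the fact that for a \emph{fixed} classifier parameter $\theta$ the estimator $I^{\gamma_\theta}_n$ is assembled from two i.i.d.\ sample averages, and to control each one by the Strong Law of Large Numbers (SLLN). The key enabling fact is Assumption (A3): clipping forces $\gamma_\theta(x)\in[\tau,1-\tau]$ for every $x$, so the per-sample quantities $\log\frac{\gamma_\theta(x)}{1-\gamma_\theta(x)}$ and $\frac{\gamma_\theta(x)}{1-\gamma_\theta(x)}$ are uniformly bounded, the former lying in $[-M,M]$ with $M=\log\frac{1-\tau}{\tau}$ and the latter in $\left[\frac{\tau}{1-\tau},\frac{1-\tau}{\tau}\right]$. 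In particular both have finite means, which is exactly what SLLN requires, and boundedness is what lets us reach a conclusion holding with probability $1$ rather than merely with high probability.

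First I would handle the first term. Since $x_i\sim p$ are i.i.d., SLLN gives $\frac{1}{n}\sum_i \log\frac{\gamma_\theta(x_i)}{1-\gamma_\theta(x_i)} \to \mathbb{E}_{x\sim p}\log\frac{\gamma_\theta(x)}{1-\gamma_\theta(x)}$ almost surely, so with probability $1$ there is an $n_1$ beyond which this term is within $\epsilon/4$ of its population counterpart. The second term needs one extra step because it is the logarithm of a mean rather than a mean of logarithms. By SLLN the inner average $\frac{1}{n}\sum_j \frac{\gamma_\theta(x_j)}{1-\gamma_\theta(x_j)}$ (over $x_j\sim q$) converges almost surely to $\mathbb{E}_{x\sim q}\frac{\gamma_\theta(x)}{1-\gamma_\theta(x)}$. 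Since this limit is bounded below by $\frac{\tau}{1-\tau}>0$ and $\log$ is continuous on any interval bounded away from zero, the continuous mapping theorem yields $\log\left(\frac{1}{n}\sum_j \frac{\gamma_\theta(x_j)}{1-\gamma_\theta(x_j)}\right)\to \log\mathbb{E}_{x\sim q}\frac{\gamma_\theta(x)}{1-\gamma_\theta(x)}$ almost surely, so with probability $1$ there is an $n_2$ beyond which this term deviates by at most $\epsilon/4$.

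Finally I would combine the two pieces by the triangle inequality: taking $n=\max(n_1,n_2)$, with probability $1$ both terms are within $\epsilon/4$ of their population values, whence $|I^{\gamma_\theta}_n - I^{\gamma_\theta}|\le \epsilon/2$. The only non-routine point, and the part I expect to require care, is the second term: being a $\log$ of an empirical average it is not itself a sample mean, so a bare SLLN does not apply directly; one must invoke continuity of $\log$ together with the strictly positive lower bound $\frac{\tau}{1-\tau}$ supplied by (A3) to push the limit through the logarithm. Everything else follows routinely from boundedness and the law of large numbers.
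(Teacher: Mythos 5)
Your proposal is correct and takes essentially the same route as the paper: split the estimator into the two bounded sample averages (boundedness coming from the clipping in (A3)), apply the law of large numbers to each, and use the lower bound $\tau/(1-\tau)$ to pass the limit through the logarithm, finishing with the triangle inequality. The only cosmetic difference is that the paper makes the log step quantitative via the Lipschitz constant $(1-\tau)/\tau$ of $\log$ on $[\tau/(1-\tau),\infty)$ (asking the second average to be within $\frac{\epsilon\tau}{4(1-\tau)}$ of its mean), whereas you invoke the continuous mapping theorem; both rest on exactly the same clipping bound.
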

\begin{proof}
We denote the empirical estimates as $\mathop{\mathbb{E}}\limits_{x \sim p_n}(\cdot)$ and $\mathop{\mathbb{E}}\limits_{x \sim q_n}(\cdot)$ respectively. The proof essentially relies on the empirical mean of functions of independent random variables converging to the true mean. More specifically, we consider the functions $f^{\theta}(x) = \log \frac{\gamma^{\theta}(x)}{1-\gamma^{\theta}(x)}$ and $g^{\theta}(x) = \frac{\gamma^{\theta}(x)}{1-\gamma^{\theta}(x)}$. Since $\gamma(x) \in [\tau, 1-\tau]$, both $f(x)$ and $g(x)$ are bounded. ($f \in [\log \frac{\tau}{1-\tau}, \log \frac{1-\tau}{\tau}]$ and $g \in [\frac{\tau}{1-\tau},  \frac{1-\tau}{\tau}]$). Functions of independent random variables are independent. Also, since the functions are bounded, they have finite mean and variance. Invoking the law of large numbers, $\exists \, n \geq n'_1(\epsilon)$ such that with probability 1
\begin{equation}
| \mathop{\mathbb{E}}\limits_{x\sim p_n} f^\theta -  \mathop{\mathbb{E}}\limits_{x\sim p} f^\theta | \leq \frac{\epsilon}{4}
\label{eq-f-emp}
\end{equation}

and $\exists \, n \geq n'_2(\epsilon)$ such that with probability 1
\begin{equation}
| \mathop{\mathbb{E}}\limits_{x\sim q_n} g^\theta -  \mathop{\mathbb{E}}\limits_{x\sim q} g^\theta | \leq \frac{\epsilon \tau}{4 (1-\tau)}
\label{eq-g-emp}
\end{equation}

Then, for $n \geq \max(n'_1(\epsilon), n'_2(\epsilon))$, we have with probability 1
\begin{align*}
&|I^{\gamma_{\theta}}_n (U;V) - I^{\gamma_{\theta}} (U;V)| \\
&\leq | \mathop{\mathbb{E}}\limits_{x\sim p_n} f^\theta -  \mathop{\mathbb{E}}\limits_{x\sim p} f^\theta | + 
| \log \mathop{\mathbb{E}}\limits_{x\sim q_n} g^\theta -  \log \mathop{\mathbb{E}}\limits_{x\sim q} g^\theta | \\
&\leq |\mathop{\mathbb{E}}\limits_{x\sim p_n} f^\theta -  \mathop{\mathbb{E}}\limits_{x\sim p} f^\theta | + 
\frac{1-\tau}{\tau}|\mathop{\mathbb{E}}\limits_{x\sim q_n} g^\theta -  \mathop{\mathbb{E}}\limits_{x\sim q} g^\theta | \\
& = \frac{\epsilon}{4} + \frac{\epsilon}{4} = \frac{\epsilon}{2}
\end{align*}
where in the last inequality, we use the Lipschitz constant for $\log$ with the bounded function $g$ as argument. 
\end{proof}

\subsection*{Proof of Theorem \ref{re-theorem-class-mi-consistent}} \label{theo_ccmi}

Using Proposition \ref{prop-1}, $I^{\gamma^*}(U;V) = I(U;V)$, where $\gamma^*$ is the unique global minimizer of $\mathrm{BCE}(\gamma)$.

The empirical risk minimizer of $\mathrm{BCE}$ loss is $\hat{\theta}$. For a rich enough class $\Theta$ and large enough samples $n$, Lemma \ref{Lemma-Generalization} and Lemma \ref{Lemma-Approximation} combine to give $\mathrm{BCE}_n(\gamma_{\hat{\theta}}) - \mathrm{BCE}(\gamma^*) \leq \epsilon'$. Applying Lemma \ref{Lemma-conv-min} with $\epsilon' = \frac{\alpha}{8 \lambda(\mathcal{X})} \left( \frac{\eta}{\beta(1-\tau} \right)^2$, we have $\| \vec{\gamma^*} - \vec{\gamma}_{\hat{\theta}} \|_1 \leq \frac{\eta}{2 \beta}$. This further implies that
\begin{equation}
\mathop{\mathbb{E}}\limits_{x\sim p}| \gamma^* - \hat{\gamma}_{\hat{\theta}}| \leq \frac{\eta}{2}  
\label{ineq-p}
\end{equation}
and 
\begin{equation}
\mathop{\mathbb{E}}\limits_{x\sim q}| \gamma^* - \hat{\gamma}_{\hat{\theta}}| \leq \frac{\eta}{2} 
\label{ineq-q}
\end{equation}

We now compute the Lipschitz constant for $f = \log \frac{\gamma}{1-\gamma}$ as a function of $\gamma$, which links the classifier predictions to Donsker-Varadhan representation. 
\[
|f^* - \hat{f}^{\hat{\theta}}| = | \log \frac{\gamma^*}{1-\gamma^*} - \log \frac{\hat{\gamma}_{\hat{\theta}}}{1-\hat{\gamma}_{\hat{\theta}}}| 
\leq \frac{1}{\tau^2} |\gamma^* -  \hat{\gamma}_{\hat{\theta}}  |
\]
and 
\[
|e^{f^*} - e^{\hat{f}^{\hat{\theta}}}| = |  \frac{\gamma^*}{1-\gamma^*} - \frac{\hat{\gamma}_{\hat{\theta}}}{1-\hat{\gamma}_{\hat{\theta}}}| 
\leq \frac{1}{\tau^2} |\gamma^* -  \hat{\gamma}_{\hat{\theta}}  |
\]

For $\gamma \in  [\tau, 1-\tau ]$, the function $f \in [\log\frac{\tau}{1-\tau}, \log\frac{1-\tau}{\tau}] $ is continuous and bounded with Lipschitz constant $\frac{1}{\tau^2}$. So, using (\ref{ineq-p}) and (\ref{ineq-q}),

\[
\mathop{\mathbb{E}}\limits_{x\sim p}| f^* - \hat{f}^{\hat{\theta}}| \leq \frac{1}{\tau^2}\frac{\eta}{2} \,\,\, \mathrm{and} \,\, \mathop{\mathbb{E}}\limits_{x\sim q}| e^{f^*} - e^{\hat{f}^{\hat{\theta}}}| \leq \frac{1}{\tau^2}\frac{\eta}{2}  
\]
Finally, from the Donsker-Varadhan representation \ref{DV-bound}, 
\begin{align}
&|I(U;V) - I^{\gamma_{\hat{\theta}}} (U;V)| \leq |\mathop{\mathbb{E}}\limits_{x\sim p} f^* - \mathop{\mathbb{E}}\limits_{x\sim p}  \hat{f}^{\hat{\theta}} | + \notag \\
&|\log \mathop{\mathbb{E}}\limits_{x\sim q} e^{f^*} - \log \mathop{\mathbb{E}}\limits_{x\sim q}  e^{\hat{f}^{\hat{\theta}}} | \notag \\
&\leq \mathop{\mathbb{E}}\limits_{x\sim p}| f^* - \hat{f}^{\hat{\theta}}| + \mathop{\mathbb{E}}\limits_{x\sim q}| e^{f^*} - e^{\hat{f}^{\hat{\theta}}}| \notag \\
&= \frac{\eta}{2\tau^2} + \frac{\eta}{2\tau^2} = \frac{\eta}{\tau^2}  
\label{donsker-close} 
\end{align}

where we use the inequality $\log(t) \leq t-1$ coupled with the fact that $\mathop{\mathbb{E}}\limits_{x\sim q} e^{f^*} = 1$.
Given $\epsilon > 0$, we choose $\eta = \tau^2 \frac{\epsilon}{2} $. 

To complete the proof, we combine the above result (\ref{donsker-close}) with Lemma \ref{Lemma-Estimation} using Triangle Inequality,
\begin{align*}
&|I^{\gamma_{\hat{\theta}}}_n (U;V) - I(U;V) | \\
 \leq 
&|I^{\gamma_{\hat{\theta}}}_n (U;V) - I^{\gamma_{\hat{\theta}}} (U;V)| + |I^{\gamma_{\hat{\theta}}}(U;V) - I(U;V)| \\
&\frac{\epsilon}{2} + \frac{\epsilon}{2} =  \epsilon
\end{align*}

\begin{corollary}
CCMI is consistent. 
\label{cor-ccmi}
\end{corollary}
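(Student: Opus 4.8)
The plan is to reduce the consistency of CCMI to the already-established consistency of Classifier-MI (Theorem \ref{re-theorem-class-mi-consistent}) via the chain rule for mutual information. Recall that CCMI is the MI-Diff.+Classifier estimator, so its output is
\[
\hat{I}_n(X;Y|Z) = I^{\gamma_{\hat{\theta}_1}}_n(X;(Y,Z)) - I^{\gamma_{\hat{\theta}_2}}_n(X;Z),
\]
where each term is a separate Classifier-MI estimate trained on its own classifier. On the population side the chain rule gives the exact identity $I(X;Y|Z) = I(X;(Y,Z)) - I(X;Z)$, so the estimation error of CCMI is controlled entirely by the errors of its two constituent MI estimates.

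First I would instantiate Theorem \ref{re-theorem-class-mi-consistent} twice, using the concatenated-variable convention from the Notation section: for the first term set $U=X,\,V=(Y,Z)$, and for the second set $U=X,\,V=Z$. This requires that Assumptions (A1)--(A4) hold for both problems — in particular that the joint and product densities are bounded above and below on their respective compact domains (A2), and that both classifiers meet the Lipschitz and compactness conditions (A4). Given a target accuracy $\epsilon>0$ and confidence $\delta>0$, I would apply the theorem with accuracy $\epsilon/2$ and confidence $1-\delta/2$ to each term, yielding sample sizes $n_1$ and $n_2$ such that each estimate lies within $\epsilon/2$ of its population value with probability at least $1-\delta/2$.

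Taking $n=\max(n_1,n_2)$ and applying a union bound, both favorable events hold simultaneously with probability at least $1-\delta$. On that event the triangle inequality closes the argument:
\begin{align*}
|\hat{I}_n(X;Y|Z) - I(X;Y|Z)|
&\leq |I^{\gamma_{\hat{\theta}_1}}_n(X;(Y,Z)) - I(X;(Y,Z))| \\
&\quad + |I^{\gamma_{\hat{\theta}_2}}_n(X;Z) - I(X;Z)| \\
&\leq \tfrac{\epsilon}{2} + \tfrac{\epsilon}{2} = \epsilon.
\end{align*}

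The one point I would treat carefully — and the nearest thing to an obstacle — is that the two MI estimates are computed from the same data and are therefore statistically dependent. This dependence does \emph{not} impede the proof, because consistency is a statement about each estimate concentrating around its own target, and the union bound controls the probability that \emph{either} estimate fails without any independence assumption. The only genuinely delicate bookkeeping is verifying that (A1)--(A4), especially the two-sided density bound (A2), transfer intact to both the $(X,(Y,Z))$ and $(X,Z)$ estimation problems; once that is granted, the result follows mechanically from the two invocations of Theorem \ref{re-theorem-class-mi-consistent}.
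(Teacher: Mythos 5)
Your proposal is correct and follows essentially the same route as the paper: decompose the CMI estimate as the difference of two Classifier-MI estimates, invoke Theorem \ref{re-theorem-class-mi-consistent} for each with accuracy $\epsilon/2$, and combine via the triangle inequality for $n \geq \max(n_1, n_2)$. Your handling of the probability bookkeeping is in fact slightly more careful than the paper's --- you allocate confidence $\delta/2$ to each event and apply an explicit union bound (which, as you note, needs no independence between the two estimates), whereas the paper states each event at level $1-\delta$ and asserts the conjunction at level $1-\delta$ without justification.
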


\begin{proof}
For each individual MI estimation, we can obtain the classifier parameter $\theta_1$(resp. $\theta_2$) $\in \Theta$ such that Theorem \ref{theorem-class-mi-consistent} holds with approximation accuracy $\epsilon/2$. So, $\exists n \geq n_1(\epsilon/2)$ such that with probability at least $1-\delta$
\[
|\hat{I}_n^{\gamma_{\theta_1}}(X;YZ) - I(X;YZ)| \leq \frac{\epsilon}{2}
\]
and $ n \geq n_2(\epsilon/2)$ such that with probability at least $1-\delta$
\[
|\hat{I}_n^{\gamma_{\theta_2}}(X;Z) - I(X;Z)| \leq \frac{\epsilon}{2}
\]

Using Triangle inequality, for $n \geq \max(n_1, n_2)$, with probability at least $1-\delta$, we have
\begin{align*}
&|\hat{I}_n(X;Y|Z) - I(X;Y|Z)| \\
& = |\hat{I}_n^{\gamma_{\theta_1}}(X;Y,Z) - \hat{I}_n^{\gamma_{\theta_2}}(X;Z) - I(X;Y,Z) + I(X;Z)| \\
&\leq |\hat{I}_n^{\gamma_{\theta_1}}(X;Y,Z) -I(X;Y,Z)| + |\hat{I}_n^{\gamma_{\theta_2}}(X;Z) -I(X;Z)| \\
&\leq \frac{\epsilon}{2} + \frac{\epsilon}{2} = \epsilon
\end{align*}

\end{proof}

\begin{theorem}[Theorem \ref{theorem-true-lower-bound} restated]
The finite sample estimate from Classifier-MI is a lower bound on the true MI value with high probability, i.e., given $n$ test samples and the trained classifier parameter $\hat{\theta}$, we have for $\epsilon > 0$
\[
Pr(I(U;V) + \epsilon \geq I^{\gamma_{\hat{\theta}}}_n (U;V) ) \geq 1 - 2\exp(-Cn)
\]
where $C$ is some constant independent of $n$ and the dimension of the data.
\label{re-theorem-true-lower-bound}
\end{theorem}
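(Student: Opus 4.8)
The plan is to split the claim into one deterministic inequality and one concentration inequality, then chain them. The deterministic part says that the \emph{population} plug-in quantity $I^{\gamma_{\hat{\theta}}}(U;V)$ is always a lower bound on the true mutual information; the probabilistic part says that the \emph{finite-sample} estimate $I^{\gamma_{\hat{\theta}}}_n(U;V)$ exceeds this population quantity by more than $\epsilon$ only on an event of exponentially small probability.

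For the deterministic part I would invoke the Donsker-Varadhan representation \eqref{DV-bound}. Since $I(U;V)=D_{KL}(p\|q)$ is a supremum over all admissible functions, substituting the particular function $f^{\hat{\theta}}(x)=\log\frac{\gamma_{\hat{\theta}}(x)}{1-\gamma_{\hat{\theta}}(x)}$ induced by the trained classifier gives, for every fixed $\hat{\theta}$ and with no probabilistic argument,
\[
I^{\gamma_{\hat{\theta}}}(U;V) = \mathop{\mathbb{E}}\limits_{x\sim p} f^{\hat{\theta}}(x) - \log \mathop{\mathbb{E}}\limits_{x\sim q} e^{f^{\hat{\theta}}(x)} \leq I(U;V).
\]
This is the crux of why the estimator is a lower bound.

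For the concentration part I would condition on the training data (hence on $\hat{\theta}$), so that the $n$ test samples are i.i.d.\ and independent of $\hat{\theta}$. With $f^{\hat{\theta}}$ as above and $g^{\hat{\theta}}(x)=\frac{\gamma_{\hat{\theta}}(x)}{1-\gamma_{\hat{\theta}}(x)}$, Assumption (A3) forces $f^{\hat{\theta}}\in[\log\frac{\tau}{1-\tau},\log\frac{1-\tau}{\tau}]$ and $g^{\hat{\theta}}\in[\frac{\tau}{1-\tau},\frac{1-\tau}{\tau}]$, bounded ranges depending only on $\tau$. A one-sided Hoeffding bound controls $\frac{1}{n}\sum_i f^{\hat{\theta}}(x_i)-\mathop{\mathbb{E}}\limits_{x\sim p} f^{\hat{\theta}}$ from above; a second one-sided Hoeffding bound controls how far $\frac{1}{n}\sum_j g^{\hat{\theta}}(x_j)$ can fall below $\mathop{\mathbb{E}}\limits_{x\sim q} g^{\hat{\theta}}$, and since $g^{\hat{\theta}}$ is bounded away from zero, $\log$ is Lipschitz on its range so I can transfer this to $-\log\frac{1}{n}\sum_j g^{\hat{\theta}}(x_j)+\log\mathop{\mathbb{E}}\limits_{x\sim q} g^{\hat{\theta}}$. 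A union bound over the two events yields, for a suitable constant $C=C(\tau,\epsilon)$,
\[
Pr\left( I^{\gamma_{\hat{\theta}}}_n(U;V) \geq I^{\gamma_{\hat{\theta}}}(U;V) + \epsilon \right) \leq 2\exp(-Cn).
\]
Chaining this with the deterministic inequality gives the theorem, and since the Hoeffding ranges depend only on $\tau$, the constant $C$ is manifestly independent of $n$ and of the data dimension.

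The main obstacle I anticipate is the log-of-average term: unlike the linear first term it is a nonlinear function of an empirical mean, so I must argue that only a one-sided deviation is needed and exploit the lower bound $g^{\hat{\theta}}\geq \tau/(1-\tau)$ to make $\log$ Lipschitz with a $\tau$-dependent constant. Everything else is routine once clipping guarantees boundedness. A minor point to handle cleanly is the randomness of $\hat{\theta}$: because the concentration constant is uniform over all admissible $\hat{\theta}$ and the test samples are independent of the training set, I can state the per-$\hat{\theta}$ bound and then integrate over the law of $\hat{\theta}$ without changing the overall probability.
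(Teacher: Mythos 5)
Your proposal is correct and follows essentially the same route as the paper: the deterministic step $I^{\gamma_{\hat{\theta}}}(U;V)\leq I(U;V)$ via the Donsker--Varadhan supremum, followed by two one-sided Hoeffding bounds on the clipped, bounded functions $f^{\hat{\theta}}$ and $g^{\hat{\theta}}$ (with the $\log$-Lipschitz transfer for the second term, using $g^{\hat{\theta}}\geq\tau/(1-\tau)$), and a union bound giving the $2\exp(-Cn)$ tail. Your explicit handling of the randomness of $\hat{\theta}$ by conditioning on the training set is a point the paper leaves implicit, but it is a refinement of the same argument rather than a different approach.
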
 

\begin{proof}
\begin{multline*}
I(U;V) = \max\limits_{\gamma} I^{\gamma} (U;V)) \geq \max\limits_{\theta} I^{\gamma_{\theta}} (U;V)) \geq I^{\gamma_{\hat{\theta}}} (U;V)) 
\end{multline*}
We apply one-sided Hoeffding's inequality to (\ref{eq-f-emp}) and (\ref{eq-g-emp}) with given $\epsilon>0$, 
\begin{multline*}
Pr(\mathop{\mathbb{E}}\limits_{x\sim p_n} f^{\hat{\theta}} -  \mathop{\mathbb{E}}\limits_{x\sim p} f^{\hat{\theta}}  \leq \frac{\epsilon}{2}) \\
\geq  1 - \exp \left( -\frac{n\epsilon^2}{8 (\log ((1-\tau)/\tau ))^2}    \right) \\
= 1 - \exp(-C_1 n\epsilon^2)
\end{multline*}

\begin{multline*}
Pr \left( \mathop{\mathbb{E}}\limits_{x\sim p} g^{\hat{\theta}}  - \mathop{\mathbb{E}}\limits_{x\sim p_n} g^{\hat{\theta}} \leq \frac{\epsilon \tau}{2 (1-\tau)} \right) \\
\geq 1 - \exp \left( -\frac{n\epsilon^2}{2} \left( \frac{\tau}{1-\tau} \right)^4   \right) = 1 - \exp(-C_2 n\epsilon^2)
\end{multline*}

\[
Pr \left( I^{\gamma_{\hat{\theta}}}_n (U;V)) \leq   I^{\gamma_{\hat{\theta}}} (U;V)) + \epsilon  \right) \geq 1 - 2\exp(-Cn)
\]
where $C = \epsilon^2 \min(C_1, C_2) $.

\end{proof}

\end{document}